\def\eqref#1{equation~\ref{#1}}
\def\1{\bm{1}}
\DeclareMathAlphabet{\mathsfit}{\encodingdefault}{\sfdefault}{m}{sl}
\SetMathAlphabet{\mathsfit}{bold}{\encodingdefault}{\sfdefault}{bx}{n}
\theoremstyle{plain}
\newtheorem{theorem}{Theorem}[section]
\theoremstyle{definition}
\newtheorem{definition}[theorem]{Definition}
\theoremstyle{remark}
\newtheorem{remark}[theorem]{Remark}
\title{Purify++: Improving Diffusion-Purification with Advanced Diffusion Models and Control of Randomness}
\author{
  Boya Zhang \\
  Peking University \\
  \And
  Weijian Luo \\
  Peking University \\
  \And
  Zhihua Zhang \\
  Peking University \\
}
\begin{document}
\maketitle

\begin{abstract}
Adversarial attacks can mislead neural network classifiers. The defense against adversarial attacks is important for AI safety. Adversarial purification is a family of approaches that defend adversarial attacks with suitable pre-processing. Diffusion models have been shown to be effective for adversarial purification. Despite their success, many aspects of diffusion purification still remain unexplored. In this paper, we investigate and improve upon three limiting designs of diffusion purification: the use of an improved diffusion model, advanced numerical simulation techniques, and optimal control of randomness. Based on our findings, we propose Purify++, a new diffusion purification algorithm that is now the state-of-the-art purification method against several adversarial attacks. Our work presents a systematic exploration of the limits of diffusion purification methods. 

\end{abstract}

\section{Introduction}
Neural Networks had great successes in multiple applications, e.g. image classification \citep{Krizhevsky2012ImageNetCW,He2016DeepRL}, object detection \citep{Girshick2013RichFH}, semantic segmentation \citep{Shelhamer2014FullyCN} and so on. However, many studies have revealed the phenomenon that neural networks are easy to be cheated by adding human-invisible noise to input data \citep{fgsm,attack2}. If a nature image is added little noise by purpose, the human can still recognize the image, but a neural network classifier may fail to classify correctly. Adding such slight noise to mislead the neural network model is called the adversarial attacks~\citep{attack3,attack5,attack6,attack7}. The approach to erase the adversarial attack is called adversarial defense. In the following paper, we call the name \emph{defense} to represent adversarial defense unless we especially mention it. 

Adversarial purification, also known as pre-processing-based defense, is a major line of defense approach. The central idea of purification is to use some off-the-shelf purification model to pre-process input data before data are fed into the classifier. The main advantage of adversarial purification is that there is no need to retrain the classifier, which is different from adversarial training \citep{advtrain1,pgd,advtrain2,advtrain3,ddpmadvpuri1}. The purification models can be plug-and-played into AI systems with minor modifications. 

Diffusion models~\citep{ddpm,scoresde}, a certain class of generative models, have achieved state-of-the-art performance in applications such as images and audios generation ~\citep{beatgan,diffwave}, text-to-image generation~\citep{glide} and molecule designs~\citep{dpmmolecule}, etc. Besides outstanding generation ability, diffusion models have been shown to be effective to serve as adversarial purification models \citep{diffpure,ddpmadvpuri1,ddpmadvpuri2,guidedddpmadvpuri1,guidedddpmadvpuri2}. However, many design choices for diffusion purification still remains unexplored sufficiently. 

\begin{figure}
\centering
\includegraphics[height=4cm,width=8cm]{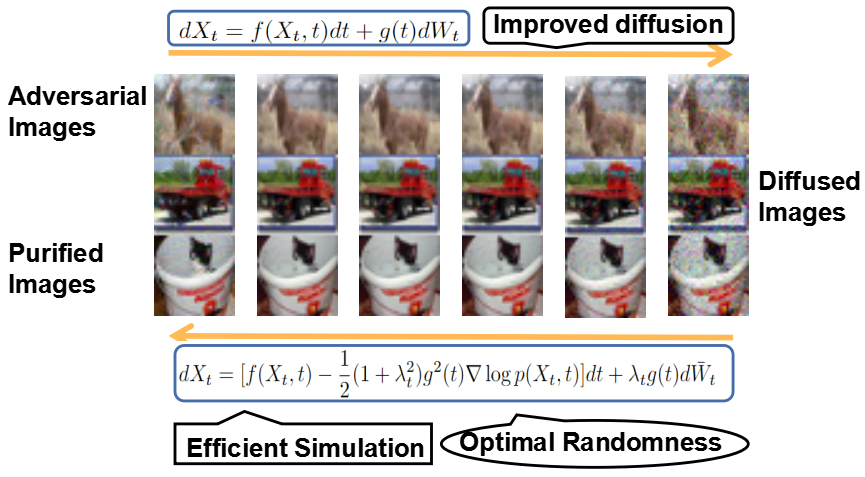}
\caption{\textbf{Illustration of Purify++.} Purify++ improves previous diffusion purification methods in three aspects: (a) improved diffusion model; (b) efficient simulation of purification SDE; (c) optimal control of randomness.}
\label{fig:illustrate}
\end{figure}

In the hope of designing an ideal diffusion purification algorithm that simultaneously has computational efficiency and strong defending performance, we seek to explore and improve the main design aspects of diffusion purification. In subsequent sections, we present our new diffusion purification algorithm, Purify++, which is powered by an improved diffusion model, efficient purification simulations, and optimal control of randomness. We conduct extensive experiments which illustrate that Purify++ achieves the state-of-the-art purification performance on several benchmarking adversarial robustness tests.

\section{Backgrounds}
\subsection{Adversarial Attacks and Defense}
Classifiers that tend to assign wrong predictions on slightly perturbed data are said to be not adversarial robust. To improve the ability of classifiers to resist adversarial attacks, there are mainly two lines of methods for enhancing classifiers' robustness. One line is the adversarial (Adv) training, which consistently trains classifiers with ground truth labels and adversarial attacked data. Such training methods enable the classifier to be non-sensitive to certain attacks and have been reported to be effective for defending various kinds of attacks \citep{advtrain1,pgd,advtrain2,advtrain3}. However, the drawback of Adv training is seeming. Adv training requires operating attacks on input images so it is computationally costly in practice. Besides, Adv trained classifiers can resist only those attacks which the model has been trained with, so the classifier must be re-trained to defend against new attacks. 

\subsection{Diffusion Purification}
Purification, i.e. incorporating a pre-processing procedure before data are fed into classifiers, is another line of methods for improving adversarial robustness. Some existing works have studied the superior defending performance of certain purification models such as denoising auto-encoders~\citep{advpurif-vae,advpurif-dae}, sets of image filters~\citep{deb2020faceguard}, compression and recovery model~\citep{jia2019comdefend} and diffusion models \citep{diffpure}, etc. In this paper, we focus on studies of diffusion models as purification models. 

A diffusion model is constructed as a time-dependent neural mapping $s(x,t)$ that has the same input and output dimensions. The key idea in the design of the diffusion model is to train the neural mapping $s(x,t)$ to match marginal score functions of a forward diffusion SDE \eqref{equ:forwardSDE} with score-matching related techniques ~\citep{Hyvrinen2005EstimationON,scoresde,vincent11,Song2019SlicedSM,Pang2020EfficientLO}. 
\begin{align}\label{equ:forwardSDE}
    dX_t = f(X_t, t)dt + g(t)dW_t, ~p_0=p_d, ~t\in [0,T].
\end{align}
The learned neural mapping $s(x,t)$ is viewed as a good substitution of marginal score functions of \eqref{equ:forwardSDE}.
The idea of diffusion purification aims to pre-process input data with a \emph{forward-and-reverse} procedure of the diffusion model. More precisely, diffusion purification simulates forward diffusion initialized from data to a certain diffusion level which is sufficient to inundate the added attack perturbation. Then it runs a numerical simulation of purification SDE \eqref{equ:rev_sde} or ODE \eqref{equ:rev_ode}, hopping to pull back the diffused data to a point that is close to and have the same output as clean data.\\
\begin{align}\label{equ:rev_sde}
    &dX_t = [f(X_t,t) - g^2(t)s(X_t,t)]dt + g(t)d\Bar{W}_t,\\
    \label{equ:rev_ode}
    &dX_t = [f(X_t,t) - \frac{1}{2}g^2(t)s(X_t,t)]dt, t\in [T,0].
\end{align}
It has been shown in theory that, under certain conditions, the purified data is close to clean data with high probability \citep{diffpure,xiao2022densepure}. By running the forward-and-reverse procedure, adversarial input data can be purified so as to erase the adversarial attacks.

Although previous diffusion purification methods have demonstrated their effectiveness in improving classifiers' adversarial robustness, many aspects of diffusion purification are still not explored sufficiently. In this paper, we analyze and improve some design aspects of diffusion purification, achieving state-of-the-art adversarial robustness on several tests on the CIFAR10 dataset \citep{Krizhevsky2009LearningML}.

\section{Purify++}\label{sec:3}
In this section, we describe three improvements of prior work on diffusion purifications: (1) fast convergence of forward diffusion processes used in previous works can not fully unlock the diffusion model's purification ability; (2) the best sampling techniques for image generation are sub-optimal for purification; (3) the strength of randomness of diffusion purification algorithm should be decided carefully to enhance adversarial robustness. Our proposed algorithm, Purify++, consists of a set of improved design choices: (1) better forward diffusion and corresponding models; (2) advanced sampling techniques; (3) optimal choice of the strength of randomness in the purification procedure.

\subsection{Improved Forward Diffusion}\label{sec:improve_diffusion}
The idea of diffusion purification aims to add noises to adversarial samples by running forward diffusion, and then remove noises by numerical simulation of purification SDE or ODE. A qualified forward diffusion is supposed to be able to simulate forward diffusion cheaply. A common solution for such forward diffusion is the so-called variance preserving (VP) diffusion \citep{scoresde}, whose forward diffusion can be implemented efficiently (with re-parametrization techniques \citep{Kingma2013AutoEncodingVB}) as a scale of data and addition of Gaussian noise.

\subsubsection{Variance Preserving Forward Diffusion}
VP diffusion is the default choice for diffusion purification in prior works \citep{diffpure,xiao2022densepure,guidedddpmadvpuri1,guidedddpmadvpuri2}. The diffusion is defined as
\begin{align}\label{equ:vp_forward}
    dX_t = -\frac{1}{2}\beta(t)X_t dt + \sqrt{\beta(t)}dW_t, t\in [0,T].
\end{align}
Default VP diffusion introduced in \citet{ddpm} takes a linear $\beta$ schedule, i.e. $\beta(t) = 2\beta_2 t^2+ \beta_1 t$. The conditional probability of VP diffusion has explicit form, as a scaling of data sample $x_0$ and injections of Gaussian noise  
\begin{align}\label{eqn:vp_condition}
    X_t|X_0 \sim \mathcal{N}(\sqrt{\alpha_t}X_0, (\mathbf{I}-\alpha_t)\mathbf{I}).
\end{align}
Here $\alpha_t = e^{-\int_0^t \beta(s)ds}$. The conditional mean vanishes as $t\to \infty$ and conditional variance converges to unit variance. The advantage of VP diffusion is the fast convergence rate to the Gaussian distribution when $t\to T$ and $T$ are sufficiently large. More precisely, if $T=\infty$ and $\beta(t)=1$, the VP diffusion turns to the Ornstein-Uhlenbeck process which has in theory exponentially fast convergence rate under certain conditions \citep{Pavliotis2014StochasticPA}. Other studies have shown fast convergence of VP diffusion with more general $\beta$ schedules. 

\subsubsection{Better Diffusion for Purification}
Although VP diffusion is successful in data generation and previous works for purification, we argue that its fast convergence of marginal distribution makes it sub-optimal for adversarial purification tasks. Since diffusion purification only cares about forward-and-reverse procedure within relatively small $t$, the fast convergence to some stationary distribution is unnecessary. So it is unnatural to use VP diffusion for purification. To sidestep this issue, we turn to consider other forward diffusions for purification. 

More precisely, we consider the variance exploding (VE) \citep{scoresde} diffusion and its generalizations \citep{Karras2022ElucidatingTD}. The VE diffusion is defined as 
\begin{align}\label{equ:ve_forward}
    dX_t = \sqrt{\frac{d \sigma(t)^2}{dt}}dW_t, t\in [0,T].
\end{align}
Default VE diffusion takes a log-linear $\sigma$ schedule $\sigma(t)= \sigma_{min}^{(1-t)}\sigma_{max}^t$ where $\sigma_{min}$ and $\sigma_{max}$ are hyper-parameter. The conditional distribution of VE diffusion is 
\begin{align*}
    p_t(x_t|x_0) = \mathcal{N}(x_0, \sigma^2(t)I).
\end{align*}
The variance of conditional distribution explodes to infinity when $t\to \infty$. One key characteristic of VE diffusion is that the conditional mean $\mathbb{E}_{x_t|x_0} x_t = x_0$ does not change with time $t$. While the conditional mean of VP diffusion contracts to zero with coefficient $\alpha_t$ in \eqref{eqn:vp_condition}. This makes two diffused data samples of VP diffusion quickly become indistinguishable with even small $t$. Such property is negative for diffusion purification because purification hopes the forward and reverse procedure is non-sensitive for time steps $t$, so more information on the learned diffusion model can be taken into account for purification instead of those with only relatively small $t$.

\begin{figure}
\label{fig:ve_vp_time_compare}
\centering
\includegraphics[height=3cm,width=8.5cm]{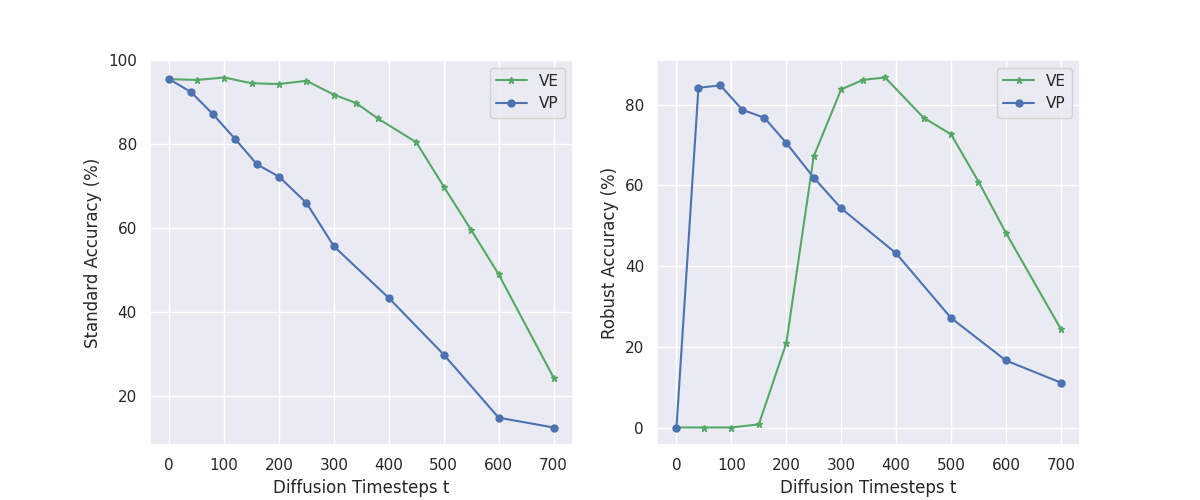}
\caption{
\textbf{Comparison of VE and VP diffusion-based purification test accuracy.}(\textbf{Left}) Standard Accuracy; (\textbf{Right}) Robust Accuracy. The green line represents diffusion purification with VE, the blue line the VP purification. The VE diffusion's optimal time is larger than VP purification which is the same as our proposed Theorem \ref{thm:inter_time} pointed out.}
\end{figure}

To make our argument more concrete, we give a formal qualitative analysis, demonstrating that VP diffusion will have quadratic interaction time while the VE diffusion has linear interaction time for a small data gap. Before our analysis, we formally define the interaction of two Gaussian distributions as
\begin{definition}[Univariate Gaussian Interaction]\label{def:Gaussian_interaction}
For two univariate Gaussian distributions with different mean $\mu_1, \mu_2$ and variance $\sigma_1^2, \sigma_2^2$, we say $\{\mathcal{N}(\mu_i, \sigma_i^2)\}_{i=1,2}$ interacts if the mean and variance satisfy $|\mu_1 - \mu_2| \leq 3\sigma_1 + 3\sigma_2$.
\end{definition}
\begin{remark}
Definition \ref{def:Gaussian_interaction} is inspired by so-called three-$\sigma$ standard which declares that samples from a uni-variate Gaussian distribution $\mathcal{N}(\mu, \sigma^2)$ will locate within interval $[\mu-3\sigma, \mu+3\sigma]$.
\end{remark}
With Definition \ref{def:Gaussian_interaction}, we define the interaction time of two Gaussian process $\mathcal{P}_{t\in \mathcal{T}}$ and $\mathcal{Q}_{t\in\mathcal{T}}$ as the first time when marginal distributions of $\mathcal{P}$ and $\mathcal{Q}$ interact. More concretely, assume $h>0$ and consider two data sample $x_0 = h$ and $x_0 = -h$. Here $2h$ accounts for the gap between the two data. The VP diffusion with linear schedule \eqref{equ:vp_forward} will have a marginal probability 
\begin{align*}
    p_t^{(VP)}(x_t|x_0) = \mathcal{N}(x_t;\sqrt{\alpha_t}x_0, 1-\alpha_t),
\end{align*}
where $\alpha_t = \exp(-\beta_2 t^2 + \beta_1 t)$. $\beta_1$ and $\beta_2$ are quadratic and linear coefficients of the VP schedule. For VE diffusion, the probability takes the form 
\begin{align*}
    p_t^{(VE)}(x_t|x_0) = \mathcal{N}(x_t;x_0, \sigma^2(t)).
\end{align*}
If we consider a relatively small data gap $h$ when $h\to 0$, we show the interaction time changes quadratically with $h$ for VP diffusion and linearly for VE diffusion. We formally give the qualitative result in 
\begin{theorem}[Order of Interaction Time]\label{thm:inter_time}
For VP diffusion with linear $\beta$ schedule \eqref{equ:vp_forward} and VE diffusion with log-linear $\sigma$ schedule \eqref{equ:ve_forward} respectively. The interaction time of VP diffusion is second-order $t^{(VP)}_* \sim o(h^2)$. The interaction time of $t^{(VE)}_* \sim o(h)$.
\end{theorem}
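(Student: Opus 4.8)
The plan is to reduce the statement, separately for each diffusion, to a single scalar equation describing the onset of interaction of the two marginal Gaussians centred at $\pm h$, and then to extract the leading-order dependence on $h$ by a Taylor expansion around $t=0$.

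For the VP diffusion the marginals at time $t$ are $\mathcal{N}(\sqrt{\alpha_t}\,h,\,1-\alpha_t)$ and $\mathcal{N}(-\sqrt{\alpha_t}\,h,\,1-\alpha_t)$, so by Definition~\ref{def:Gaussian_interaction} they interact exactly when $2\sqrt{\alpha_t}\,h \le 6\sqrt{1-\alpha_t}$; squaring (both sides are nonnegative) and rearranging gives the equivalent condition $1-\alpha_t \ge h^2/(h^2+9)$. Since $\alpha_t$ is continuous and strictly decreasing with $\alpha_0=1$, the map $t\mapsto 1-\alpha_t$ is continuous, strictly increasing and vanishes at $t=0$, so for all sufficiently small $h$ the interaction time $t_*^{(VP)}$ is the unique solution of $1-\alpha_{t_*^{(VP)}}=h^2/(h^2+9)$, and $t_*^{(VP)}\to0$ as $h\to0$. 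For the linear $\beta$-schedule one has $\tfrac{d}{dt}(1-\alpha_t)\big|_{t=0}=\beta(0)>0$ (equal to $\beta_{\min}$ for the DDPM schedule), hence $1-\alpha_t=\beta(0)\,t+O(t^2)$ near $0$; combined with $h^2/(h^2+9)=h^2/9+O(h^4)$ this forces $\beta(0)\,t_*^{(VP)}(1+o(1))=h^2/9$, i.e. $t_*^{(VP)}=h^2/(9\beta(0))+o(h^2)=\Theta(h^2)$.

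For the VE diffusion the marginals are $\mathcal{N}(h,\sigma^2(t))$ and $\mathcal{N}(-h,\sigma^2(t))$, and the interaction criterion $2h\le6\sigma(t)$ reduces to $\sigma(t)\ge h/3$. With the linearised (EDM-type) noise schedule, for which $\sigma$ is smooth with $\sigma(0)=0$ and $\sigma'(0)>0$ — the simplest instance being $\sigma(t)=t$ — the function $\sigma$ is strictly increasing from $0$, so for small $h$ the interaction time is the unique root of $\sigma(t_*^{(VE)})=h/3$, which again tends to $0$. Expanding $\sigma(t)=\sigma'(0)\,t+O(t^2)$ yields $t_*^{(VE)}=h/(3\sigma'(0))+o(h)=\Theta(h)$, one power of $h$ larger than $t_*^{(VP)}$, which is exactly the content of the theorem.

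I expect the main obstacle to lie not in the algebra but in the passage to the $h\to0$ regime, and in particular in the nonzero initial noise level of the genuine log-linear schedule $\sigma(t)=\sigma_{\min}^{1-t}\sigma_{\max}^{t}$. There $\sigma(0)=\sigma_{\min}>0$, so once $h\le3\sigma_{\min}$ the two marginals already interact at $t=0$ and the interaction time degenerates to $0$; the clean $\Theta(h)$ scaling is recovered either by adopting the EDM parametrisation $\sigma(t)=t$ as above, or by measuring the data gap above the noise floor, i.e. writing $h=3\sigma_{\min}+\delta$ and solving $\sigma(t_*^{(VE)})=h/3$, which gives $t_*^{(VE)}=\delta/\big(3\sigma_{\min}\log(\sigma_{\max}/\sigma_{\min})\big)+o(\delta)$, still linear in the excess gap. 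One must also confirm that $t_*^{(VP)},t_*^{(VE)}<T$ for small $h$ so that the monotonicity and expansion arguments apply, but that is automatic since $t_*\to0$; the remaining steps — squaring the defining inequalities, inverting monotone smooth functions, and Taylor expansion — are routine.
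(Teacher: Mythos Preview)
Your approach is essentially the same as the paper's: reduce each diffusion to a scalar interaction equation in the noise level, invert, and Taylor-expand. For VP the paper also arrives at $1-\alpha_t$ of order $h^2/9$ and hence $t_*^{(VP)}$ quadratic in $h$; for VE it solves $\sigma(t_*)=h/3$ under the log-linear schedule and expands the logarithm.

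One point worth flagging: the concern you raise about the log-linear schedule having $\sigma(0)=\sigma_{\min}>0$ is real, and the paper does not sidestep it by passing to $\sigma(t)=t$. Instead it silently imposes $h/3>\sigma_{\min}$ and expands $\log\big((h/3)/\sigma_{\min}\big)=\log\big(1+(h/3-\sigma_{\min})/\sigma_{\min}\big)$ around $h/3=\sigma_{\min}$, obtaining exactly your second resolution, namely $t_*^{(VE)}\approx (h/3-\sigma_{\min})/\big(\sigma_{\min}\log(\sigma_{\max}/\sigma_{\min})\big)$, linear in the excess gap. So the asymptotic regime in the paper is effectively $h\downarrow 3\sigma_{\min}$ rather than $h\to 0$; your write-up is more explicit about this than the paper itself, and your alternative via the EDM parametrisation $\sigma(t)=t$ is a cleaner way to obtain a genuine $\Theta(h)$ statement as $h\to 0$.
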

\begin{remark}
Theorem \ref{thm:inter_time} states that for a relatively small data gap, the interaction time of VE diffusion is of the first order, bigger than the interaction time of VP diffusion which is of second order. Such qualitative results confirm the intuition that two distinct data samples of VP diffusion become indistinguishable faster than VE diffusion.
\end{remark}
We put detailed proof of Theorem \ref{thm:inter_time} in Appendix \ref{app:a1}. We conduct an experiment to compare both the optimal robust accuracy and optimal diffusion time of VE and VP-based diffusion purification on the MNIST dataset in Figure \ref{fig:ve_vp_time_compare}. The figure shows firstly that VP's fast convergence harms the overall optimal purification performance. Secondly, relatively few time levels are available for satisfactory purification. However, for VE diffusion, nearly all diffusion time level has similar robustness, which over-perform the best purification performance of VP diffusion. For our proposed Purify++, we use a powerful generalization of VE diffusion, the EDM which is proposed in \citet{Karras2022ElucidatingTD}. 

\subsection{Efficient Purification Simulation}\label{sec:solver}
Numerical simulation techniques (or solvers) are an important design for image generation with diffusion models. Some recent works \citep{JolicoeurMartineau2021GottaGF,Liu2022PseudoNM,Karras2022ElucidatingTD} pointed out that higher-order numerical solvers of SDE \eqref{equ:rev_sde} and ODE \eqref{equ:rev_ode} leads to significant improvements on image generations. But the sacrifice the higher-order solvers pay is the increasing computational costs. Typically, the first-order Euler-Maruyama (EM) discretization will have to $\mathcal{O}(1/h)$ computational cost where $h$ is the discretization step size. However, the higher-order method's computational costs scale polynomially.

Inspired by extensive research on both lower and higher-order numerical methods, we are curious about the question: Do high-order simulation techniques for diffusion purification significantly outperform lower-order ones?

\begin{table}
\caption{Diffusion Purification (Rob Acc) and Image Generation (FID) Performance respect to total diffusion steps on CIFAR10.}
\vskip 0.1in
\centering
\begin{tabular}{l c c c c c }
\toprule
Metric  & \multicolumn{2}{c}{Rob Acc}  &  \multicolumn{2}{c}{FID}  \\
\hline 
Steps   &        EM    &  Heun  &            EM    & Heun \\
\midrule 
25      &        88.04 & 88.07 &            22.4  & 3.7 \\
50      &        88.79 & 88.94 &            6.8   & 3.1 \\
100     &        89.32 & 89.35 &            4.2   & 2.9  \\
\bottomrule
\end{tabular}
\vskip -0.1in
\label{tab:heun}
\end{table}
To get a comprehensive understanding of the difference in choice of numerical methods between diffusion purification and image generation, we conduct an experiment comparing the purification performance of first-order EM and second-order Heun methods with different numbers of diffusion time steps. We also compare the FID, a metric for accessing image generation quality, with the adversarial robustness under the same settings. Table \ref{tab:heun} shows that the generation performance (FID, lower the better) with the EM method explodes dramatically as the number of evaluation steps is reduced. However, the purification performance (Robust Acc) with the EM method does not significantly change. This indicates that techniques for better sampling can not be transferred into diffusion purification arbitrarily. Our findings show that for diffusion purification, first-order solvers perform almost equally well as second-order solvers. We believe one reason is, for sampling, the numerical simulation starts from pure random noise which is easy to introduce much more truncation errors. However, for purification, the simulation starts from slightly noised data, which can potentially give small truncation errors regardless of first or second order.

In practice, we still use Heun's second-order methods for their stability. Besides, we reduce the number of diffusion steps up to 20, which greatly speeds up our proposed diffusion purification algorithm.

\subsection{Optimal Control of Randomness}\label{sec:randomenss}
The third improvement for diffusion purification is the optimal control of randomness. The importance of randomness in defending against adversarial attacks with pre-processing based method has been pointed out in \citet{Cohen2019CertifiedAR,Jia2019CertifiedRF,Rosenfeld2020CertifiedRT}. The randomized smoothing algorithm \citep{Cohen2019CertifiedAR} proposed to add multivariate Gaussian noise on adversarial samples and then feed the noised samples into the classifier. They proved that such addition of random Gaussian noise improves the adversarial robustness of any classifier. 

Besides, we give another view on the positive aspects of randomness in diffusion purification through an extreme example. The ODE \eqref{equ:rev_ode} shares the same marginal distributions as the forward diffusion process \eqref{equ:forwardSDE} as been proved in \citet{scoresde}. If we simulate both the forward and purification procedure with ODE \eqref{equ:rev_ode}, the input data can be restored exactly so that no purification has been done. But if we simulate forward diffusion with SDE \eqref{equ:forwardSDE}, and then the purification procedure with ODE, we find empirically the adversarial robustness is improved. 

Inspired by the above two intuitions, we study the influence of randomness in diffusion purification algorithms. More precisely, we reformulate the purification SDE \eqref{equ:rev_sde} as a combination of purification ODE \eqref{equ:rev_ode} and Langevin dynamics (LD) \citep{Brooks2011HandbookOM}.
\begin{align}
    &dX_t = [f(X_t,t) - \frac{1}{2}g^2(t)\nabla \log p(X_t,t)]dt\\
    & + [-\frac{1}{2}g^2(t)\nabla \log p(X_t,t)dt + g(t)d\Bar{W}_t].
\end{align}
Generalizing this formulation, any linear combination of ODE and LD also shares the same marginal distribution as the forward diffusion process provided that the initial distribution is the same. We formally present this result in Theorem \ref{thm:mix_sde}. 
\begin{theorem}\label{thm:mix_sde}
Assume $f$ and $g$ are under some conditions. Consider the diffusion process 
\[
dX_t = f(X_t,t)dt + g(t)dW_t,
\]
where $W$ is an independent Weiner process. Assume $0\leq \lambda_t\leq 1$ is a positive mix coefficient. Then the process which mixes the reversed ODE and marginal Langevin dynamics shares the same marginal distribution as the forward diffusion
\begin{align}\label{eqn:mix_sde}
    &dX_t =[f(X_t,t) - \frac{1}{2}(1+\lambda_t^2)g^2(t)\nabla \log p(X_t,t)]dt \\
    &+ \lambda_t g(t)d\Bar{W}_t, t\in [T,0]. 
\end{align}
\end{theorem}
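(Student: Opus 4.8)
The plan is to verify the claim at the level of marginal densities through the Fokker--Planck (Kolmogorov forward) equation and then invoke uniqueness of its solutions. Write $p_t = p(\cdot,t)$ for the marginal law of the forward diffusion $dX_t = f(X_t,t)dt + g(t)dW_t$. Under the assumed regularity of $f,g$ (say Lipschitz drift with linear growth, and enough smoothness/positivity that the score $\nabla \log p_t$ is well defined and locally integrable), $p_t$ is the unique solution of
\[
\partial_t p_t = -\nabla\cdot\big(f(\cdot,t)\,p_t\big) + \tfrac12 g^2(t)\,\Delta p_t, \qquad p_0 = p_d .
\]
I would take this as the standing fact to match.

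First I would reverse time, setting $\tau = T-t$, so that the candidate process \eqref{eqn:mix_sde}, which runs for $t\in[T,0]$, becomes an ordinary forward-time It\^o SDE in $\tau\in[0,T]$: the backward Wiener increment $d\bar W_t$ becomes a forward $dW_\tau$ and the drift flips sign, giving drift $-\big[f - \tfrac12(1+\lambda_t^2)g^2(t)\nabla\log p_t\big]$ and diffusion coefficient $\lambda_t g(t)$, initialized from the law $p_T$ at $\tau=0$. Its marginal density $q_\tau$ then solves
\[
\partial_\tau q_\tau = \nabla\cdot\!\Big(\big[f - \tfrac12(1+\lambda_t^2)g^2(t)\nabla\log p_t\big]\,q_\tau\Big) + \tfrac12 \lambda_t^2 g^2(t)\,\Delta q_\tau, \qquad t = T-\tau .
\]
The second step is to substitute the ansatz $q_\tau = p_{T-\tau}$ and check it. The only nontrivial identity needed is $p_t\,\nabla\log p_t = \nabla p_t$, whence $\nabla\cdot\big(g^2(t)\,\nabla\log p_t\cdot p_t\big) = g^2(t)\,\Delta p_t$; plugging in, the two terms carrying the factor $\lambda_t^2$ cancel exactly, and what survives is precisely $-\partial_t p_t = \nabla\cdot(f p_t) - \tfrac12 g^2(t)\Delta p_t$, which is the forward Fokker--Planck equation above. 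Since $q_\tau$ and $p_{T-\tau}$ then solve the same linear parabolic equation with the same initial datum $p_T$, uniqueness forces $q_\tau = p_{T-\tau}$, i.e. the process \eqref{eqn:mix_sde} has marginals $p_t$ for every $t$.

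An equivalent, more transparent route that I would at least sketch: decompose \eqref{eqn:mix_sde} as the probability-flow ODE $dX_t = [f - \tfrac12 g^2(t)\nabla\log p_t]dt$ --- known to transport the marginals $p_t$ (Song et al.) --- plus the frozen-time Langevin increment $-\tfrac12\lambda_t^2 g^2(t)\nabla\log p_t\,dt + \lambda_t g(t)\,d\bar W_t$, whose generator annihilates $p_t$; adding a generator that is ``mean zero'' against $p_t$ to a flow already carrying $p_t$ leaves the marginals unchanged. Either way, the real work is not the algebra, which collapses in one line once $p\nabla\log p=\nabla p$ is used, but pinning down the hypotheses on $f$ and $g$ (and the implied regularity of $p_t$ and $\nabla\log p_t$) under which the Fokker--Planck equation is well posed with a unique solution and the reverse-time SDE makes sense; that is the only place I expect genuine friction, and I would package it as the ``some conditions'' assumption rather than belabor it.
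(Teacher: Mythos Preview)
Your proposal is correct and follows essentially the same route as the paper: both arguments work at the level of the Fokker--Planck equation and hinge on the single identity $p_t\nabla\log p_t=\nabla p_t$, which makes the $\lambda_t^2$ contributions from the extra drift and the diffusion term cancel. The only cosmetic difference is direction: the paper starts from the reverse ODE's Fokker--Planck equation and algebraically inserts $\pm\tfrac12\lambda_t^2 g^2\nabla\log p_t$ to arrive at the mixed SDE's Fokker--Planck equation, whereas you write the mixed SDE's Fokker--Planck equation first and verify that $p_{T-\tau}$ solves it; these are the same computation read forwards versus backwards. Your alternative ``ODE plus stationary Langevin'' sketch is exactly the heuristic the paper uses to motivate the theorem before proving it.
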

The coefficient $\lambda_t$ controls the strength of randomness of diffusion purification. Through experiments, we find that with optimal choice of randomness strength, the purification performance can be further improved by a significant margin. Table \ref{tab:randomness} records the relation of optimal robust accuracy and random strength $\lambda$. The best robust accuracy occurs when taking $\lambda=0.75$ which corresponds to neither the ODE nor the purification SDE. 

Combining an improved diffusion model, efficient simulation purification method, and control of the strength of randomness, we formally propose Purify++ as a diffusion purification algorithm that integrates the EDM diffusion model, Heun's simulation, and control of randomness. 

\begin{table}
\caption{Influence of strength of randomness in Purify++. We choose the strength of randomness ($\lambda$) to vary from 0 to 1 and record corresponding standard accuracy (St Acc) and robust accuracy (Rob Acc). The experiment is conducted on the MNIST dataset.}
\vskip 0.1in
\centering
\small
\scalebox{0.9}{
\begin{tabular}{l c c c c c c}
\toprule
$\lambda$ & 0.0 (ODE) & 0.25 &  0.5 & 0.75 & 1.0 (SDE) \\
\midrule 
Std ACC  & 99.17  & 99.22 & 99.10 & 99.06 & 99.00\\
Rob ACC & 90.51 & 92.53 & 93.15 & \textbf{93.36} & 92.98\\
\bottomrule
\end{tabular}}
\vskip -0.1in
\label{tab:randomness}
\end{table}

\section{Experiments}
In this section, we validate the performance of our proposed method Purify++ on various benchmarks and datasets. We also perform ablation studies to quantify the improvement obtained by the three key components of our method, respectively. Due to the space limitation, we defer experimental details and more ablation study experimental results to Appendix \ref{app:exp1}.

\paragraph{Models} As for the raw classifiers, we consider different architectures for a fair comparison to past works in adversarial defenses. More precisely, for MNIST datasets, we use LeNet~\citep{pgd} model with two convolutional layers. For CIFAR10 datasets, we use Residual Networks~\citep{He2016DeepRL} and Wide ResNet~\citep{wideresnet} with depth 28 and width 10, which are intensely used in prior adversarial defense methods. Input images are normalized according to how the classifiers are trained. As for the diffusion model, we use EDM, a generalization of the VE diffusion model proposed in \citet{Karras2022ElucidatingTD}. We use Heun's second-order numerical simulation of SDE \eqref{equ:rev_sde} and study the trade-off between computational cost (NFEs) and adversarial performance. The strength of randomness is controlled by a mixing coefficient as described in Section \ref{sec:3}.

\paragraph{Adversarial Attacks and Evaluation Metrics} We evaluate our proposed Purify++ against three kinds of current strongest attacks mainly on $\ell_p$-norm bounded threat models and compare the performance with other existing adversarial defense methods. The diffusion model and classifier are trained independently on the same training dataset as in previous works, and the performance metrics are calculated on corresponding test datasets. We leverage two main metrics to evaluate the performance of our defense proposal: \textit{standard} and \textit{robust} accuracy. The standard accuracy measures the performance of the defense method on clean data, which is evaluated on the whole test set. The robust accuracy measures the classification performance on adversarial examples generated by different attacks.

\subsection{Defending against Black-Box Attacks}
We first evaluate our method against black-box attacks, which are constructed with no information about classifiers' detailed compositions~\citep{attack5}. In this setting, attackers know nothing about the internal structure of the model being attacked, so neither the classifier architecture nor the gradient with respect to the loss function. 

SPSA~\citep{Spall1992MultivariateSA, Uesato2018AdversarialRA} is a kind of score-based strong black-box attack, which uses stochastic perturbation-based optimization techniques to search for attacking directions. Since SPSA requires a large number of examples to estimate the gradient, we use 1,280 queries the same as in \citet{scoreadvpuri}. Square Attack~\citep{Andriushchenko2019SquareAA} is a query-efficient black-box attack that integrates a random search scheme for improved efficiency. It has become a standard attacking strategy for evaluations of defenses including both adversarial training and adversarial purification methods.

We conduct experiments on the CIFAR10 dataset under Square Attack and SPSA to evaluate the ability of our proposed method for defending against black-box attacks. To assess the empirical robustness of Purify++, we take into account both $\ell_{\infty}$-norm and $\ell_2$-norm threat models. As is shown in Table \ref{tab:bbattack}, we can see that our method has much higher robust accuracy against both SPSA and Square Attack than prior adversarial purification methods by a large margin. These results show that our method is insensitive to the impact of gradient obfuscation.

\begin{table}[h]
\caption{Robust accuracy against strong black-box attacks: Square Attack and SPSA, on the CIFAR10 dataset, evaluating on WideResNet-28-10. (* marks that the corresponding results are obtained on a subset from the whole test dataset.)}
\label{tab:bbattack}
\vskip 0.1in
\begin{center}
\begin{tabular}{ccccc}
\toprule
Attack & Method  & Robust Acc\\
\midrule
Square Attack & & \\
\hline
\quad $\ell_{\infty}$ & \citet{diffpure} & 85.42* \\
\quad $\ell_{\infty}$ & Purify++ (Ours) & \textbf{91.21*} \\
\quad $\ell_2$ & \citet{diffpure} & 88.02* \\
\quad $\ell_2$ & Purify++ (Ours) & \textbf{91.80*} \\
\midrule
SPSA  \\
\hline
\quad 1280 & \citet{scoreadvpuri} & 80.8\\
\quad 1280 & Purify++ (Ours) & \textbf{90.5*}\\
\bottomrule
\end{tabular}
\end{center}
\vskip -0.1in
\end{table}

\subsection{Defending against Gray-Box Attacks}

In the gray-box setting, attackers have full access to the classifier but have no knowledge about the pre-processor, i.e., the purification model. Adversarial examples are generated only upon the classifier while being evaluated with the whole model. This kind of malicious perturbation is limited, thus is less effective than white-box attacks. Note that gray-box attacks can be considered transfer-based black-box attacks, with the raw classifier as the source model and the whole model (including the preprocessor) as the target model. 

To evaluate our defense method against gray-box attacks, we test on adversarial examples from the projected gradient descent (PGD) attack~\citep{pgd} on the raw classifier. 
PGD is a gradient-based attack, which iteratively perturbs input images along the direction of the gradient of classifier loss function projection to some $l_p \epsilon$-ball.
$$\delta \leftarrow \Pi_\epsilon\left(\delta+\alpha \cdot \operatorname{Proj}\left(\nabla_x \operatorname{Loss}(f(x+\delta), y)\right)\right).$$ 
To verify the consistency of Purify++ across different datasets, we perform experiments against Classifier-PGD attacks for both MNIST and CIFAR10 datasets. Figure \ref{fig:mnist_vis} shows a visualization of Purify++ when defending against Classifier-PGD attacks on the MNIST dataset. It demonstrates that our defense method can purify the attacked images successfully, which leads to higher robust accuracy, after feeding the purified images to pre-trained classifiers regardless of the choice of network architecture.

\begin{figure}
\vskip 0.1in
\centering
\includegraphics[height=3cm,width=6cm]{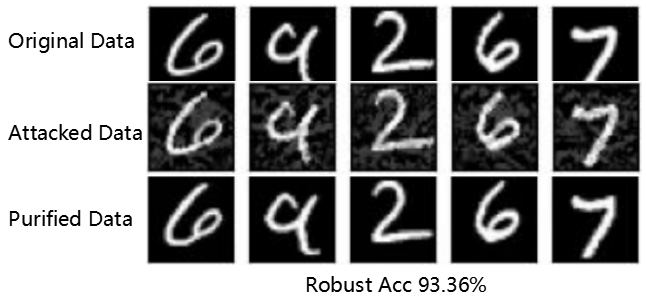}
\caption{Illustration of Purify++ on MNIST for defending against Classifier-PGD attack under $\ell_\infty(\epsilon=0.25)$ threat model.}
\label{fig:mnist_vis}
\vskip -0.1in
\end{figure}

\begin{table}[h]
\caption{Standard accuracy and robust accuracy against Classifier-PGD attack under $\ell_{\infty}(\epsilon=8 / 255)$ threat model on the CIFAR10 dataset, compared with other preprocessor-based adversarial defenses and adversarial training methods against transfer-based attacks, using different classifier architecture.} \label{tab:cifar10-1}
\vskip 0.1in
\begin{center}
\setlength{\tabcolsep}{3pt}
\begin{tabular}{lllc}
\toprule 
\multirow{2}{*}{ Models } & \multicolumn{2}{c}{ Accuracy $(\%)$} & \multirow{2}{*}{Architecture} \\
& Standard & Robust & \\
\midrule
Raw Classifier & 95.19 & 0.00 & WRN28-10 \\
\midrule
\multicolumn{4}{l}{Adversarial training} \\
\citet{pgd} & 87.30 & 70.20 & Res56 \\
\citet{Zhang2019TheoreticallyPT} & 84.90 & 72.20 & Res56 \\
\midrule
\multicolumn{4}{l}{Adversarial purification} \\
\citet{Du2019ImplicitGA} & 48.70 & 37.50 & WRN28-10 \\
\citet{Grathwohl2019YourCI} & 75.50 & 23.80 & WRN28-10 \\
\citet{Yang2019MENetTE} & 94.90 & 82.50 & Res18 \\
\citet{Song2017PixelDefendLG} & & & \\
\hspace{3mm}Natural & 82 & 61 & Res62 \\
\hspace{3mm}+AT & 90 & 70 & Res62 \\
\citet{scoreadvpuri} & & & \\
\hspace{3mm}$\sigma=0.1$ & 93.09 & 85.45 & WRN28-10 \\
\hspace{3mm}$\sigma=0.25$ & 86.14 & 80.24 & WRN28-10 \\
\midrule
 Purify++ & 92.76 & 89.26 & \multirow{2}{*}{ WRN28-10 } \\
(E+H) &  \begin{tiny}$\pm 0.16$\end{tiny} &  \begin{tiny}$\pm 0.16$\end{tiny} & \\
 Purify++ & 92.41 & \textbf{90.08} & \multirow{2}{*}{ WRN28-10 } \\
(E+H+R) &  \begin{tiny}$\pm 0.14$\end{tiny} &  \begin{tiny}$\pm 0.07$\end{tiny} & \\
\bottomrule
\end{tabular}
\end{center}
\vskip -0.1in
\end{table}

Table \ref{tab:cifar10-1} presents the comparison of adversarial performances on the CIFAR10 dataset under $\ell_\infty(\epsilon=8/255)$ threat model, and Table \ref{tab:cifar10-2} shows the robustness performance under $\ell_2(\epsilon=0.5)$. The experimental settings follow previous purification methods closely, including the attack budgets and input data normalization. 

We compare our method with two kinds of defenses: adversarial purification methods and adversarial training methods. As Table \ref{tab:cifar10-1} indicates, our method obtains high robust accuracy, and outperforms previous state-of-the-art methods, on CIFAR10 against Classifier-PGD ($l_\infty$-norm) attack, with comparable standard accuracy. Table \ref{tab:cifar10-1} also demonstrates an ablation study of our proposed three improvements: improved diffusion model, advanced simulation technique, and optimal randomness control. Each of the improvements contributes to improved robust accuracy. Table \ref{tab:cifar10-2} shows the robustness performance against Classifier-PGD ($l_2$-norm) attack. Purify++ achieves higher robust accuracy than previous defenses. 

AutoAttack~\citep{Croce2020ReliableEO}, an ensemble of powerful parameter-free attacks, has become a standard adversarial robustness benchmark for evaluating adversarial defense strategies with high reliability. For a defense strategy, AutoAttack's standard version composes of a series of challenging attacks including variants of PGD, FAB \citep{Croce2019MinimallyDA}, and Square-Attack. For a comprehensive evaluation of Purify++, we conduct a comparison with the previous SOTA diffusion purification method, DiffPure~\citep{diffpure}. We use the standard version of AutoAttack and transfer the attack from the raw classifier to our purification model, keeping the same experimental settings with \citet{diffpure}. 
\begin{table}[h]
\caption{Standard accuracy and robust accuracy against Classifier-PGD attack under $\ell_2(\epsilon=0.5)$ threat model on CIFAR10 dataset, compared with other preprocessor-based adversarial defenses and adversarial training methods.} \label{tab:cifar10-2}
\vskip 0.1in
\begin{center}
\setlength{\tabcolsep}{4pt}
\begin{tabular}{lccl}
\toprule 
\multirow{2}{*}{ Models } & \multicolumn{2}{c}{ Accuracy $(\%)$} \\
& Standard & Robust \\
\midrule
Raw Classifier & $95.19$ & $0.30$ &  \\
\midrule
\citet{Rony2019} & 89.05 & 67.6 \\
\citet{Ding2020MMA} & 88.02 & 66.18 \\
\citet{Rice2020OverfittingIA} & 88.67 & 71.6 \\
\citet{wu2020adversarial} & 88.51 & 73.66 \\
\citet{Gowal2020UncoveringTL} & 90.9 & 74.5 \\
\midrule
Purify++ & $\mathbf{92.29} \pm 0.20$ & $\mathbf{91.13} \pm 0.18$ &  \\
\bottomrule
\end{tabular}
\end{center}
\vskip -0.1in
\end{table} 
Table \ref{tab:cifar10-3} summarizes the main results. We compare the robust accuracy with the AutoAttack standard attacks under $l_\infty$ and $l_2$ norm. The Purify++ outperforms DiffPure under both $l_\infty$ and $l_2$ norm threat models. Combining results in Table \ref{tab:cifar10-1}-\ref{tab:cifar10-3}, we can conclude that Purify++ consistently achieves better robustness performance against transfer-based attacks than previous defenses. 

\begin{table}[h]
\caption{Comparison with DiffPure against transfer-based attacks under different $\ell_p$-norm threat models.} \label{tab:cifar10-3}
\vskip 0.1in
\begin{center}
\begin{tabular}{ccccc}
\toprule
$\ell_p$-norm & Classifier & DiffPure & Purify++ \\
\midrule
$\ell_{\infty}$ & $0.00$ & $89.58$ & $\mathbf{91.29 \pm 0.23}$ \\
$\ell_2$ & $0.00$ & $90.37$ &  $\mathbf{91.28 \pm 0.38}$\\
\bottomrule
\end{tabular}
\end{center}
\vskip -0.1in
\end{table}

\subsection{Defending against Strong Adaptive Attacks}

\begin{table}[h]
\caption{Standard accuracy and robust accuracy against BPDA+EOT attack under $\ell_{\infty}(\epsilon=8 / 255)$ threat model on CIFAR10, compared with other preprocessor-based adversarial defenses and adversarial training methods against white-box attacks, using different classifier architecture.} \label{tab:cifar10-4}
\vspace{0.1in}
\centering
\setlength{\tabcolsep}{4pt}
\begin{tabular}{llllc}
\toprule
\multirow{2}{*}{ Models } & \multicolumn{2}{c}{Accuracy (\%)} & \multirow{2}{*}{Architecture} \\
 & Natural & Robust & \\
\midrule
\multicolumn{4}{l}{Adversarial training} \\
\citet{pgd} & 87.3 & 45.80 & Res18 \\
\citet{Zhang2019TheoreticallyPT}  & 84.90 & 56.43 & Res18 \\
\citet{Carmon2019UnlabeledDI} & 89.67 & 63.10 & WRN28-10 \\
\citet{Gowal2020UncoveringTL} & 89.48 & 64.08 & WRN28-10 \\
\midrule
\multicolumn{4}{l}{Adversarial purification} \\
\citet{Song2017PixelDefendLG} & 95.00 & 9.00 & Res62 \\
\citet{Yang2019MENetTE} & & &\\
\hspace{3mm}BPDA & 94.80 & 40.80 & Res18 \\
\hspace{3mm}Approx. Input & 89.40 & 41.50 & Res18 \\
\hspace{3mm}Approx. Input (+AT) & 88.70 & 62.50 & Res18 \\
\citet{Hill2020StochasticSA} & 84.12 & 54.90 & WRN28-10 \\
\citet{scoreadvpuri} & 86.14 & 70.01 & WRN28-10 \\
\citet{diffpure} & 89.02 & 81.40 & WRN28-10 \\
\midrule
Purify++ & 89.91 & \textbf{83.33} & \multirow{2}{*}{ WRN28-10 } \\
(Ours) & \begin{tiny}$\pm 0.51$\end{tiny} & \begin{tiny}$\pm 1.18$\end{tiny} & \\
\bottomrule
\end{tabular}
\vspace{-0.1in}
\end{table}

\begin{table}[!h]
\caption{\textbf{Insensitivity Property of Purify++.} The table records the best robust accuracy and corresponding standard accuracy of Purify++ on the CIFAR10 test dataset with different total diffusion steps against Classifier-PGD attacks under $\ell_{\infty}$-norm threat model.}
\label{tab:non-sensitivity-pgd}
\vskip 0.1in
\begin{scriptsize}
\begin{center}
\begin{tabular}{ccccc}
\toprule
Total Diffusion Steps $T$ &  Diffusion Steps for Purification $t^*$ & \# of NFEs & Standard Accuracy & Robust Accuracy\\
\midrule
25 & 8 & 16 & 91.72 & 88.07(-\%1.43)\\
50 & 16 & 32 & 91.54 & 88.94(-\%0.46) \\
100 & 30 & 60 & 92.77 & \textbf{89.35}\\
200 & 60 & 120 & 92.81 & 89.06(-\%0.32)\\
\bottomrule
\end{tabular}
\end{center}
\end{scriptsize}
\vskip -0.1in
\end{table}

In this section, we further validate our defense method against strong adaptive attacks. Since the diffusion model involves multiple iterations through neural networks, it might lead to issues with obfuscated gradients. BPDA~\citep{bpda} is specially designed for non-differentiable preprocessing-based defense strategies. The idea of BPDA aims to memorize the forward pass of integration of the pre-processer and classifier to approximate the derivative w.r.t. input sample. Besides, taking the stochastic defense methods into account, the Expectation over Transformation (EOT)~\citep{attack7} is a method for calculating the gradients of random functions as the expectation values of multiple random stochastic sessions. The adaptive attack, BPDA+EOT, is commonly used for evaluating adversarial purification methods with stochasticity. 

We conduct experiments against BPDA+EDT attack under $\ell_{\infty}(\epsilon=8 / 255)$ threat model on CIFAR10 dataset. Note that, we assess the robust accuracy of our method on a fixed subset of 512 pictures randomly picked from the test set due to the high computational cost of applying strong adaptive attacks to our whole model, the same as \citet{diffpure}. Table \ref{tab:cifar10-4} shows the standard and robust accuracy performance compared with other adversarial defenses. Our method achieves the best robust accuracy while maintaining high standard accuracy, outperforming prior diffusion purification methods.

\subsection{Insensitivity to Total Sampling Steps}
Besides the standard adversarial robustness that we have shown in previous sections, Purify++ has an additional advantage, i.e. insensitivity to the total diffusion steps of the purification procedure. The robustness performance does not drop dramatically with the reduction of total diffusion steps. Such insensitivity property is important for the purification algorithm in practice because we can achieve significant efficiency improvement at a small cost of robust accuracy. In this section, we further study the insensitivity property of Purify++ with the optimal control of randomness. Table \ref{tab:non-sensitivity-pgd} shows the Purify++'s robust and standard accuracy under different total diffusion steps (with different numbers of NFEs). For Classifier PGD attacks, the robust accuracy drops less than $\%1.5$ when the number of total diffusion steps varies from 200 to 25 (the number of NFEs from 120 to 16). This key property indicates that Purify++ is reliable under different levels of computation budgets.

\section{Related Works}
\subsection{Adversarial Purification}
Adversarial purification is a paradigm for defending against attacks by pre-processing the input data before data are fed into the classifier.\citet{scoreadvpuri} used a pre-trained NCSN model \citep{ncsnv1} and simulated Langevin dynamics on noised data to purify attacked samples. \citet{diffpure} formally used a pre-trained diffusion model to purify adversarial samples. They simulated VP forward diffusion initialized with attacked samples and ran default sampling SDE of the VP diffusion model for purification. They did not use advanced simulation techniques and improved forward diffusion. \citet{guidedddpmadvpuri1} and \citet{guidedddpmadvpuri2} also used modified diffusion models' sampling SDE for purification in terms of robust accuracy on testing data on the CIFAR10 dataset. \citet{carlini2022certified} and \citet{xiao2022densepure} studied the empirical evaluation of the certified robustness of diffusion purification.

\subsection{Improvements on Diffusion Models}
Improvements in both architectures and sampling techniques of diffusion models are intensively studied. \citet{scoresde} used UNet architectures for diffusion models. \citet{Kingma2021VariationalDM} additionally introduced Fourier features into UNet architectures. \citet{Dockhorn2021ScoreBasedGM} replaced the forward diffusion model with critically damped Langevin dynamics. \citet{Peebles2022ScalableDM} replaced UNet model with vision transformer model \citep{Dosovitskiy2020AnII} which is more scalabal to data dimensions. \citet{Karras2022ElucidatingTD} proposed a generalization of VE forward diffusion together with training techniques such as data augmentations. 

High-performing numerical simulation of sampling procedure of diffusion models is another hot research topic \citep{JolicoeurMartineau2021GottaGF,Liu2022PseudoNM,Lu2022DPMSolverAF,Luhman2021KnowledgeDI,improveddiff,Salimans2022ProgressiveDF,Watson2021LearningTE,Watson2022LearningFS,Karras2022ElucidatingTD}. Among them, studies on higher order numerical solution of SDE or ODE related to Purify++ the most. \citet{Liu2022PseudoNM} and \citet{Zhang2022FastSO} proposed to use of linear multi-step methods which accelerate the sampling efficiency of the diffusion model. \citet{Lu2022DPMSolverAF} pointed out the semi-linear structure of VP diffusion and propose a DPM-Solver with a smaller numerical error. \citet{Karras2022ElucidatingTD} and \citet{JolicoeurMartineau2021GottaGF} studied the use of Heun's method for the simulation of ODE. 

\section{Conclusion}
The paper explores three design choices for diffusion purification methods and demonstrates the benefits of improved designs in terms of empirical robustness. The proposed Purify++ achieves state-of-the-art diffusion purification performance on the CIFAR10 dataset. Our proposed Purify++ not only tries to torch the limits of diffusion purification methods but also sheds some light on further developments in diffusion-based purification methods.

\newpage
\bibliography{biography}

\begin{thebibliography}{80}
\providecommand{\natexlab}[1]{#1}
\providecommand{\url}[1]{#1}
\csname url@samestyle\endcsname
\providecommand{\newblock}{\relax}
\providecommand{\bibinfo}[2]{#2}
\providecommand{\BIBentrySTDinterwordspacing}{\spaceskip=0pt\relax}
\providecommand{\BIBentryALTinterwordstretchfactor}{4}
\providecommand{\BIBentryALTinterwordspacing}{\spaceskip=\fontdimen2\font plus
\BIBentryALTinterwordstretchfactor\fontdimen3\font minus \fontdimen4\font\relax}
\providecommand{\BIBforeignlanguage}[2]{{%
\expandafter\ifx\csname l@#1\endcsname\relax
\typeout{** WARNING: IEEEtranN.bst: No hyphenation pattern has been}%
\typeout{** loaded for the language `#1'. Using the pattern for}%
\typeout{** the default language instead.}%
\else
\language=\csname l@#1\endcsname
\fi
#2}}
\providecommand{\BIBdecl}{\relax}
\BIBdecl

\bibitem[Krizhevsky et~al.(2012)Krizhevsky, Sutskever, and Hinton]{Krizhevsky2012ImageNetCW}
A.~Krizhevsky, I.~Sutskever, and G.~E. Hinton, ``Imagenet classification with deep convolutional neural networks,'' \emph{Communications of the ACM}, vol.~60, pp. 84 -- 90, 2012.

\bibitem[He et~al.(2016)He, Zhang, Ren, and Sun]{He2016DeepRL}
K.~He, X.~Zhang, S.~Ren, and J.~Sun, ``Deep residual learning for image recognition,'' \emph{2016 IEEE Conference on Computer Vision and Pattern Recognition (CVPR)}, pp. 770--778, 2016.

\bibitem[Girshick et~al.(2013)Girshick, Donahue, Darrell, and Malik]{Girshick2013RichFH}
R.~B. Girshick, J.~Donahue, T.~Darrell, and J.~Malik, ``Rich feature hierarchies for accurate object detection and semantic segmentation,'' \emph{2014 IEEE Conference on Computer Vision and Pattern Recognition}, pp. 580--587, 2013.

\bibitem[Shelhamer et~al.(2014)Shelhamer, Long, and Darrell]{Shelhamer2014FullyCN}
E.~Shelhamer, J.~Long, and T.~Darrell, ``Fully convolutional networks for semantic segmentation,'' \emph{2015 IEEE Conference on Computer Vision and Pattern Recognition (CVPR)}, pp. 3431--3440, 2014.

\bibitem[Goodfellow et~al.(2015)Goodfellow, Shlens, and Szegedy]{fgsm}
I.~Goodfellow, J.~Shlens, and C.~Szegedy, ``Explaining and harnessing adversarial examples,'' in \emph{International Conference on Learning Representations}, 2015.

\bibitem[Nguyen et~al.(2015)Nguyen, Yosinski, and Clune]{attack2}
A.~Nguyen, J.~Yosinski, and J.~Clune, ``Deep neural networks are easily fooled: High confidence predictions for unrecognizable images,'' in \emph{Proceedings of the IEEE conference on computer vision and pattern recognition}, 2015, pp. 427--436.

\bibitem[Szegedy et~al.(2013)Szegedy, Zaremba, Sutskever, Bruna, Erhan, Goodfellow, and Fergus]{attack3}
C.~Szegedy, W.~Zaremba, I.~Sutskever, J.~Bruna, D.~Erhan, I.~Goodfellow, and R.~Fergus, ``Intriguing properties of neural networks,'' \emph{arXiv preprint arXiv:1312.6199}, 2013.

\bibitem[Papernot et~al.(2017)Papernot, McDaniel, Goodfellow, Jha, Celik, and Swami]{attack5}
N.~Papernot, P.~McDaniel, I.~Goodfellow, S.~Jha, Z.~B. Celik, and A.~Swami, ``Practical black-box attacks against machine learning,'' in \emph{Proceedings of the 2017 ACM on Asia conference on computer and communications security}, 2017, pp. 506--519.

\bibitem[Evtimov et~al.(2017)Evtimov, Eykholt, Fernandes, Kohno, Li, Prakash, Rahmati, and Song]{attack6}
I.~Evtimov, K.~Eykholt, E.~Fernandes, T.~Kohno, B.~Li, A.~Prakash, A.~Rahmati, and D.~Song, ``Robust physical-world attacks on machine learning models,'' \emph{arXiv preprint arXiv:1707.08945}, vol.~2, no.~3, p.~4, 2017.

\bibitem[Athalye et~al.(2018{\natexlab{a}})Athalye, Engstrom, Ilyas, and Kwok]{attack7}
A.~Athalye, L.~Engstrom, A.~Ilyas, and K.~Kwok, ``Synthesizing robust adversarial examples,'' in \emph{International conference on machine learning}.\hskip 1em plus 0.5em minus 0.4em\relax PMLR, 2018, pp. 284--293.

\bibitem[Kurakin et~al.(2017)Kurakin, Goodfellow, and Bengio]{advtrain1}
A.~Kurakin, I.~J. Goodfellow, and S.~Bengio, ``Adversarial machine learning at scale,'' in \emph{International Conference on Learning Representations}, 2017.

\bibitem[Madry et~al.(2018)Madry, Makelov, Schmidt, Tsipras, and Vladu]{pgd}
A.~Madry, A.~Makelov, L.~Schmidt, D.~Tsipras, and A.~Vladu, ``Towards deep learning models resistant to adversarial attacks,'' in \emph{International Conference on Learning Representations}, 2018.

\bibitem[Zhang et~al.(2019{\natexlab{a}})Zhang, Yu, Jiao, Xing, El~Ghaoui, and Jordan]{advtrain2}
H.~Zhang, Y.~Yu, J.~Jiao, E.~Xing, L.~El~Ghaoui, and M.~Jordan, ``Theoretically principled trade-off between robustness and accuracy,'' in \emph{International conference on machine learning}.\hskip 1em plus 0.5em minus 0.4em\relax PMLR, 2019, pp. 7472--7482.

\bibitem[Rebuffi et~al.(2021)Rebuffi, Gowal, Calian, Stimberg, Wiles, and Mann]{advtrain3}
S.-A. Rebuffi, S.~Gowal, D.~A. Calian, F.~Stimberg, O.~Wiles, and T.~Mann, ``Fixing data augmentation to improve adversarial robustness,'' \emph{arXiv preprint arXiv:2103.01946}, 2021.

\bibitem[Gowal et~al.(2021)Gowal, Rebuffi, Wiles, Stimberg, Calian, and Mann]{ddpmadvpuri1}
S.~Gowal, S.-A. Rebuffi, O.~Wiles, F.~Stimberg, D.~A. Calian, and T.~A. Mann, ``Improving robustness using generated data,'' \emph{Advances in Neural Information Processing Systems}, vol.~34, 2021.

\bibitem[Ho et~al.(2020)Ho, Jain, and Abbeel]{ddpm}
J.~Ho, A.~Jain, and P.~Abbeel, ``Denoising diffusion probabilistic models,'' \emph{Advances in Neural Information Processing Systems}, vol.~33, pp. 6840--6851, 2020.

\bibitem[Song et~al.(2021{\natexlab{a}})Song, Sohl-Dickstein, Kingma, Kumar, Ermon, and Poole]{scoresde}
Y.~Song, J.~Sohl-Dickstein, D.~P. Kingma, A.~Kumar, S.~Ermon, and B.~Poole, ``Score-based generative modeling through stochastic differential equations,'' in \emph{International Conference on Learning Representations}, 2021.

\bibitem[Dhariwal and Nichol(2021)]{beatgan}
P.~Dhariwal and A.~Q. Nichol, ``Diffusion models beat {GAN}s on image synthesis,'' in \emph{Advances in Neural Information Processing Systems}, 2021.

\bibitem[Kong et~al.(2021)Kong, Ping, Huang, Zhao, and Catanzaro]{diffwave}
Z.~Kong, W.~Ping, J.~Huang, K.~Zhao, and B.~Catanzaro, ``Diffwave: A versatile diffusion model for audio synthesis,'' in \emph{International Conference on Learning Representations}, 2021.

\bibitem[Nichol et~al.(2022)Nichol, Dhariwal, Ramesh, Shyam, Mishkin, Mcgrew, Sutskever, and Chen]{glide}
A.~Q. Nichol, P.~Dhariwal, A.~Ramesh, P.~Shyam, P.~Mishkin, B.~Mcgrew, I.~Sutskever, and M.~Chen, ``Glide: Towards photorealistic image generation and editing with text-guided diffusion models,'' in \emph{International Conference on Machine Learning}.\hskip 1em plus 0.5em minus 0.4em\relax PMLR, 2022, pp. 16\,784--16\,804.

\bibitem[Hoogeboom et~al.(2022)Hoogeboom, Satorras, Vignac, and Welling]{dpmmolecule}
E.~Hoogeboom, V.~G. Satorras, C.~Vignac, and M.~Welling, ``Equivariant diffusion for molecule generation in 3d,'' in \emph{International Conference on Machine Learning}.\hskip 1em plus 0.5em minus 0.4em\relax PMLR, 2022, pp. 8867--8887.

\bibitem[Nie et~al.(2022)Nie, Guo, Huang, Xiao, Vahdat, and Anandkumar]{diffpure}
W.~Nie, B.~Guo, Y.~Huang, C.~Xiao, A.~Vahdat, and A.~Anandkumar, ``Diffusion models for adversarial purification,'' in \emph{Proceedings of the 39th International Conference on Machine Learning}, 2022, pp. 16\,805--16\,827.

\bibitem[Blau et~al.(2022)Blau, Ganz, Kawar, Bronstein, and Elad]{ddpmadvpuri2}
T.~Blau, R.~Ganz, B.~Kawar, A.~Bronstein, and M.~Elad, ``Threat model-agnostic adversarial defense using diffusion models,'' \emph{arXiv preprint arXiv:2207.08089}, 2022.

\bibitem[Wang et~al.(2022)Wang, Lyu, Lin, Dai, and Fu]{guidedddpmadvpuri1}
J.~Wang, Z.~Lyu, D.~Lin, B.~Dai, and H.~Fu, ``Guided diffusion model for adversarial purification,'' \emph{arXiv preprint arXiv:2205.14969}, 2022.

\bibitem[Wu et~al.(2022)Wu, Ye, and Gu]{guidedddpmadvpuri2}
Q.~Wu, H.~Ye, and Y.~Gu, ``Guided diffusion model for adversarial purification from random noise,'' \emph{arXiv preprint arXiv:2206.10875}, 2022.

\bibitem[Hwang et~al.(2019)Hwang, Park, Jang, Yoon, and Cho]{advpurif-vae}
U.~Hwang, J.~Park, H.~Jang, S.~Yoon, and N.~I. Cho, ``Puvae: A variational autoencoder to purify adversarial examples,'' \emph{IEEE Access}, vol.~7, pp. 126\,582--126\,593, 2019.

\bibitem[Kalaria et~al.(2022)Kalaria, Hazra, and Chakrabarti]{advpurif-dae}
D.~Kalaria, A.~Hazra, and P.~P. Chakrabarti, ``Towards adversarial purification using denoising autoencoders,'' \emph{arXiv preprint arXiv:2208.13838}, 2022.

\bibitem[Deb et~al.(2020)Deb, Liu, and Jain]{deb2020faceguard}
D.~Deb, X.~Liu, and A.~K. Jain, ``Faceguard: A self-supervised defense against adversarial face images,'' \emph{arXiv preprint arXiv:2011.14218}, 2020.

\bibitem[Jia et~al.(2019)Jia, Wei, Cao, and Foroosh]{jia2019comdefend}
X.~Jia, X.~Wei, X.~Cao, and H.~Foroosh, ``Comdefend: An efficient image compression model to defend adversarial examples,'' in \emph{Proceedings of the IEEE/CVF conference on computer vision and pattern recognition}, 2019, pp. 6084--6092.

\bibitem[Hyv{\"a}rinen(2005)]{Hyvrinen2005EstimationON}
A.~Hyv{\"a}rinen, ``Estimation of non-normalized statistical models by score matching,'' \emph{J. Mach. Learn. Res.}, vol.~6, pp. 695--709, 2005.

\bibitem[Vincent(2011)]{vincent11}
P.~Vincent, ``A connection between score matching and denoising autoencoders,'' \emph{Neural computation}, vol.~23, no.~7, pp. 1661--1674, 2011.

\bibitem[Song et~al.(2019)Song, Garg, Shi, and Ermon]{Song2019SlicedSM}
Y.~Song, S.~Garg, J.~Shi, and S.~Ermon, ``Sliced score matching: A scalable approach to density and score estimation,'' in \emph{Conference on Uncertainty in Artificial Intelligence}, 2019.

\bibitem[Pang et~al.(2020)Pang, Xu, Li, Song, Ermon, and Zhu]{Pang2020EfficientLO}
T.~Pang, K.~Xu, C.~Li, Y.~Song, S.~Ermon, and J.~Zhu, ``Efficient learning of generative models via finite-difference score matching,'' \emph{Advances in Neural Information Processing Systems}, vol.~33, pp. 19\,175--19\,188, 2020.

\bibitem[Xiao et~al.(2022)Xiao, Chen, Jin, Wang, Nie, Liu, Anandkumar, Li, and Song]{xiao2022densepure}
C.~Xiao, Z.~Chen, K.~Jin, J.~Wang, W.~Nie, M.~Liu, A.~Anandkumar, B.~Li, and D.~Song, ``Densepure: Understanding diffusion models towards adversarial robustness,'' \emph{arXiv preprint arXiv:2211.00322}, 2022.

\bibitem[Krizhevsky(2009)]{Krizhevsky2009LearningML}
A.~Krizhevsky, ``Learning multiple layers of features from tiny images,'' 2009.

\bibitem[Kingma and Welling(2013)]{Kingma2013AutoEncodingVB}
D.~P. Kingma and M.~Welling, ``Auto-encoding variational bayes,'' \emph{arXiv preprint arXiv:1312.6114}, 2013.

\bibitem[Pavliotis(2014)]{Pavliotis2014StochasticPA}
G.~A. Pavliotis, ``Stochastic processes and applications: Diffusion processes, the fokker-planck and langevin equations,'' 2014.

\bibitem[Karras et~al.(2022{\natexlab{a}})Karras, Aittala, Aila, and Laine]{Karras2022ElucidatingTD}
T.~Karras, M.~Aittala, T.~Aila, and S.~Laine, ``Elucidating the design space of diffusion-based generative models,'' in \emph{Advances in Neural Information Processing Systems}, 2022.

\bibitem[Jolicoeur-Martineau et~al.(2021)Jolicoeur-Martineau, Li, Pich{\'e}-Taillefer, Kachman, and Mitliagkas]{JolicoeurMartineau2021GottaGF}
A.~Jolicoeur-Martineau, K.~Li, R.~Pich{\'e}-Taillefer, T.~Kachman, and I.~Mitliagkas, ``Gotta go fast when generating data with score-based models,'' \emph{arXiv preprint arXiv:2105.14080}, 2021.

\bibitem[Liu et~al.(2022)Liu, Ren, Lin, and Zhao]{Liu2022PseudoNM}
L.~Liu, Y.~Ren, Z.~Lin, and Z.~Zhao, ``Pseudo numerical methods for diffusion models on manifolds,'' in \emph{International Conference on Learning Representations}, 2022.

\bibitem[Cohen et~al.(2019)Cohen, Rosenfeld, and Kolter]{Cohen2019CertifiedAR}
J.~M. Cohen, E.~Rosenfeld, and J.~Z. Kolter, ``Certified adversarial robustness via randomized smoothing,'' in \emph{International Conference on Machine Learning}, 2019.

\bibitem[Jia et~al.(2020)Jia, Cao, Wang, and Gong]{Jia2019CertifiedRF}
J.~Jia, X.~Cao, B.~Wang, and N.~Z. Gong, ``Certified robustness for top-k predictions against adversarial perturbations via randomized smoothing,'' in \emph{International Conference on Learning Representations}, 2020.

\bibitem[Rosenfeld et~al.(2020)Rosenfeld, Winston, Ravikumar, and Kolter]{Rosenfeld2020CertifiedRT}
E.~Rosenfeld, E.~Winston, P.~Ravikumar, and J.~Z. Kolter, ``Certified robustness to label-flipping attacks via randomized smoothing,'' in \emph{International Conference on Machine Learning}, 2020.

\bibitem[Brooks et~al.(2011)Brooks, Gelman, Jones, and Meng]{Brooks2011HandbookOM}
S.~P. Brooks, A.~Gelman, G.~L. Jones, and X.-L. Meng, ``Handbook of markov chain monte carlo,'' 2011.

\bibitem[Zagoruyko and Komodakis(2016)]{wideresnet}
S.~Zagoruyko and N.~Komodakis, ``Wide residual networks,'' in \emph{British Machine Vision Conference 2016}.\hskip 1em plus 0.5em minus 0.4em\relax British Machine Vision Association, 2016.

\bibitem[Spall(1992)]{Spall1992MultivariateSA}
J.~C. Spall, ``Multivariate stochastic approximation using a simultaneous perturbation gradient approximation,'' \emph{IEEE Transactions on Automatic Control}, vol.~37, pp. 332--341, 1992.

\bibitem[Uesato et~al.(2018)Uesato, O’donoghue, Kohli, and Oord]{Uesato2018AdversarialRA}
J.~Uesato, B.~O’donoghue, P.~Kohli, and A.~Oord, ``Adversarial risk and the dangers of evaluating against weak attacks,'' in \emph{International Conference on Machine Learning}.\hskip 1em plus 0.5em minus 0.4em\relax PMLR, 2018, pp. 5025--5034.

\bibitem[Yoon et~al.(2021)Yoon, Hwang, and Lee]{scoreadvpuri}
J.~Yoon, S.~J. Hwang, and J.~Lee, ``Adversarial purification with score-based generative models,'' in \emph{International Conference on Machine Learning}.\hskip 1em plus 0.5em minus 0.4em\relax PMLR, 2021, pp. 12\,062--12\,072.

\bibitem[Andriushchenko et~al.(2020)Andriushchenko, Croce, Flammarion, and Hein]{Andriushchenko2019SquareAA}
M.~Andriushchenko, F.~Croce, N.~Flammarion, and M.~Hein, ``Square attack: a query-efficient black-box adversarial attack via random search,'' in \emph{Computer Vision--ECCV 2020: 16th European Conference, Glasgow, UK, August 23--28, 2020, Proceedings, Part XXIII}.\hskip 1em plus 0.5em minus 0.4em\relax Springer, 2020, pp. 484--501.

\bibitem[Zhang et~al.(2019{\natexlab{b}})Zhang, Yu, Jiao, Xing, El~Ghaoui, and Jordan]{Zhang2019TheoreticallyPT}
H.~Zhang, Y.~Yu, J.~Jiao, E.~Xing, L.~El~Ghaoui, and M.~Jordan, ``Theoretically principled trade-off between robustness and accuracy,'' in \emph{International conference on machine learning}.\hskip 1em plus 0.5em minus 0.4em\relax PMLR, 2019, pp. 7472--7482.

\bibitem[Du and Mordatch(2019)]{Du2019ImplicitGA}
Y.~Du and I.~Mordatch, ``Implicit generation and modeling with energy based models,'' in \emph{Neural Information Processing Systems}, 2019.

\bibitem[Grathwohl et~al.(2020)Grathwohl, Wang, Jacobsen, Duvenaud, Norouzi, and Swersky]{Grathwohl2019YourCI}
W.~Grathwohl, K.-C. Wang, J.-H. Jacobsen, D.~Duvenaud, M.~Norouzi, and K.~Swersky, ``Your classifier is secretly an energy based model and you should treat it like one,'' in \emph{International Conference on Learning Representations}, 2020.

\bibitem[Yang et~al.(2019)Yang, Zhang, Katabi, and Xu]{Yang2019MENetTE}
Y.~Yang, G.~Zhang, D.~Katabi, and Z.~Xu, ``Me-net: Towards effective adversarial robustness with matrix estimation,'' in \emph{International Conference on Machine Learning}, 2019.

\bibitem[Song et~al.(2018)Song, Kim, Nowozin, Ermon, and Kushman]{Song2017PixelDefendLG}
Y.~Song, T.~Kim, S.~Nowozin, S.~Ermon, and N.~Kushman, ``Pixeldefend: Leveraging generative models to understand and defend against adversarial examples,'' in \emph{International Conference on Learning Representations}, 2018.

\bibitem[Croce and Hein(2020)]{Croce2020ReliableEO}
F.~Croce and M.~Hein, ``Reliable evaluation of adversarial robustness with an ensemble of diverse parameter-free attacks,'' in \emph{International conference on machine learning}.\hskip 1em plus 0.5em minus 0.4em\relax PMLR, 2020, pp. 2206--2216.

\bibitem[Croce and Hein(2019)]{Croce2019MinimallyDA}
------, ``Minimally distorted adversarial examples with a fast adaptive boundary attack,'' in \emph{International Conference on Machine Learning}, 2019.

\bibitem[Rony et~al.(2019)Rony, Hafemann, Oliveira, Ayed, Sabourin, and Granger]{Rony2019}
J.~Rony, L.~G. Hafemann, L.~S. Oliveira, I.~B. Ayed, R.~Sabourin, and E.~Granger, ``Decoupling direction and norm for efficient gradient-based l2 adversarial attacks and defenses,'' in \emph{2019 IEEE/CVF Conference on Computer Vision and Pattern Recognition (CVPR)}.\hskip 1em plus 0.5em minus 0.4em\relax IEEE Computer Society, 2019, pp. 4317--4325.

\bibitem[Ding et~al.(2020)Ding, Sharma, Lui, and Huang]{Ding2020MMA}
G.~W. Ding, Y.~Sharma, K.~Y.~C. Lui, and R.~Huang, ``Mma training: Direct input space margin maximization through adversarial training,'' in \emph{International Conference on Learning Representations}, 2020.

\bibitem[Rice et~al.(2020)Rice, Wong, and Kolter]{Rice2020OverfittingIA}
L.~Rice, E.~Wong, and J.~Z. Kolter, ``Overfitting in adversarially robust deep learning,'' in \emph{International Conference on Machine Learning}, 2020.

\bibitem[Wu et~al.(2020)Wu, Xia, and Wang]{wu2020adversarial}
D.~Wu, S.-T. Xia, and Y.~Wang, ``Adversarial weight perturbation helps robust generalization,'' in \emph{Advances in Neural Information Processing Systems}, vol.~33, 2020, pp. 2958--2969.

\bibitem[Gowal et~al.(2020)Gowal, Qin, Uesato, Mann, and Kohli]{Gowal2020UncoveringTL}
S.~Gowal, C.~Qin, J.~Uesato, T.~A. Mann, and P.~Kohli, ``Uncovering the limits of adversarial training against norm-bounded adversarial examples,'' \emph{ArXiv}, vol. abs/2010.03593, 2020.

\bibitem[Carmon et~al.(2019)Carmon, Raghunathan, Schmidt, Duchi, and Liang]{Carmon2019UnlabeledDI}
Y.~Carmon, A.~Raghunathan, L.~Schmidt, J.~C. Duchi, and P.~S. Liang, ``Unlabeled data improves adversarial robustness,'' in \emph{Advances in Neural Information Processing Systems}, vol.~32, 2019.

\bibitem[Hill et~al.(2021)Hill, Mitchell, and Zhu]{Hill2020StochasticSA}
M.~Hill, J.~C. Mitchell, and S.-C. Zhu, ``Stochastic security: Adversarial defense using long-run dynamics of energy-based models,'' in \emph{International Conference on Learning Representations}, 2021.

\bibitem[Athalye et~al.(2018{\natexlab{b}})Athalye, Carlini, and Wagner]{bpda}
A.~Athalye, N.~Carlini, and D.~Wagner, ``Obfuscated gradients give a false sense of security: Circumventing defenses to adversarial examples,'' in \emph{International conference on machine learning}.\hskip 1em plus 0.5em minus 0.4em\relax PMLR, 2018, pp. 274--283.

\bibitem[Song and Ermon(2019)]{ncsnv1}
Y.~Song and S.~Ermon, ``Generative modeling by estimating gradients of the data distribution,'' \emph{Advances in Neural Information Processing Systems}, vol.~32, 2019.

\bibitem[Carlini et~al.(2022)Carlini, Tramer, Kolter, et~al.]{carlini2022certified}
N.~Carlini, F.~Tramer, J.~Z. Kolter \emph{et~al.}, ``Certified!! adversarial robustness for free!'' \emph{arXiv preprint arXiv:2206.10550}, 2022.

\bibitem[Kingma et~al.(2021)Kingma, Salimans, Poole, and Ho]{Kingma2021VariationalDM}
D.~Kingma, T.~Salimans, B.~Poole, and J.~Ho, ``Variational diffusion models,'' \emph{Advances in neural information processing systems}, vol.~34, pp. 21\,696--21\,707, 2021.

\bibitem[Dockhorn et~al.(2022)Dockhorn, Vahdat, and Kreis]{Dockhorn2021ScoreBasedGM}
T.~Dockhorn, A.~Vahdat, and K.~Kreis, ``Score-based generative modeling with critically-damped langevin diffusion,'' in \emph{International Conference on Learning Representations}, 2022.

\bibitem[Peebles and Xie(2022)]{Peebles2022ScalableDM}
W.~Peebles and S.~Xie, ``Scalable diffusion models with transformers,'' \emph{arXiv preprint arXiv:2212.09748}, 2022.

\bibitem[Dosovitskiy et~al.(2021)Dosovitskiy, Beyer, Kolesnikov, Weissenborn, Zhai, Unterthiner, Dehghani, Minderer, Heigold, Gelly, Uszkoreit, and Houlsby]{Dosovitskiy2020AnII}
A.~Dosovitskiy, L.~Beyer, A.~Kolesnikov, D.~Weissenborn, X.~Zhai, T.~Unterthiner, M.~Dehghani, M.~Minderer, G.~Heigold, S.~Gelly, J.~Uszkoreit, and N.~Houlsby, ``An image is worth 16x16 words: Transformers for image recognition at scale,'' in \emph{International Conference on Learning Representations}, 2021.

\bibitem[Lu et~al.(2022)Lu, Zhou, Bao, Chen, Li, and Zhu]{Lu2022DPMSolverAF}
C.~Lu, Y.~Zhou, F.~Bao, J.~Chen, C.~Li, and J.~Zhu, ``{DPM}-solver: A fast {ODE} solver for diffusion probabilistic model sampling in around 10 steps,'' in \emph{Advances in Neural Information Processing Systems}, 2022.

\bibitem[Luhman and Luhman(2021)]{Luhman2021KnowledgeDI}
E.~Luhman and T.~Luhman, ``Knowledge distillation in iterative generative models for improved sampling speed,'' \emph{arXiv preprint arXiv:2101.02388}, 2021.

\bibitem[Nichol and Dhariwal(2021)]{improveddiff}
A.~Q. Nichol and P.~Dhariwal, ``Improved denoising diffusion probabilistic models,'' in \emph{International Conference on Machine Learning}.\hskip 1em plus 0.5em minus 0.4em\relax PMLR, 2021, pp. 8162--8171.

\bibitem[Salimans and Ho(2022)]{Salimans2022ProgressiveDF}
T.~Salimans and J.~Ho, ``Progressive distillation for fast sampling of diffusion models,'' in \emph{International Conference on Learning Representations}, 2022.

\bibitem[Watson et~al.(2021)Watson, Ho, Norouzi, and Chan]{Watson2021LearningTE}
D.~Watson, J.~Ho, M.~Norouzi, and W.~Chan, ``Learning to efficiently sample from diffusion probabilistic models,'' \emph{arXiv preprint arXiv:2106.03802}, 2021.

\bibitem[Watson et~al.(2022)Watson, Chan, Ho, and Norouzi]{Watson2022LearningFS}
D.~Watson, W.~Chan, J.~Ho, and M.~Norouzi, ``Learning fast samplers for diffusion models by differentiating through sample quality,'' in \emph{International Conference on Learning Representations}, 2022.

\bibitem[Zhang and Chen(2022)]{Zhang2022FastSO}
Q.~Zhang and Y.~Chen, ``Fast sampling of diffusion models with exponential integrator,'' in \emph{NeurIPS 2022 Workshop on Score-Based Methods}, 2022.

\bibitem[Song and Ermon(2020)]{ncsnv2}
Y.~Song and S.~Ermon, ``Improved techniques for training score-based generative models,'' \emph{Advances in neural information processing systems}, vol.~33, pp. 12\,438--12\,448, 2020.

\bibitem[Song et~al.(2021{\natexlab{b}})Song, Durkan, Murray, and Ermon]{scoreflow}
Y.~Song, C.~Durkan, I.~Murray, and S.~Ermon, ``Maximum likelihood training of score-based diffusion models,'' \emph{Advances in Neural Information Processing Systems}, vol.~34, 2021.

\bibitem[Karras et~al.(2022{\natexlab{b}})Karras, Aittala, Aila, and Laine]{karras22}
T.~Karras, M.~Aittala, T.~Aila, and S.~Laine, ``Elucidating the design space of diffusion-based generative models,'' \emph{arXiv preprint arXiv:2206.00364}, 2022.

\end{thebibliography}
\bibliographystyle{IEEEtranN}

\newpage
\appendix
\onecolumn
\section{Proofs in Section \ref{sec:3}}
\subsection{Proof of Theorem \ref{thm:inter_time}}\label{app:a1}
\begin{theorem}[Order of Interaction Time]
Consider VP diffusion with linear $\beta$-schedule \eqref{equ:vp_forward} and VE diffusion with log-linear $\sigma$-schedule \eqref{equ:ve_forward} respectively. The interaction time of VP diffusion is second-order $t^{(VP)}_* \sim o(h^2)$. The interaction time of $t^{(VE)}_* \sim o(h)$.
\end{theorem}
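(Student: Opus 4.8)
The plan is to turn Definition~\ref{def:Gaussian_interaction} into an explicit inequality in $t$ for each process, recognise the interaction time as the first zero of a monotone ``slack'' function, and then extract its leading order in $h$ by a first-order expansion near $t=0$.

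First I would set up the two inequalities. For $x_0=\pm h$ the marginals $p_t(\cdot\mid h)$ and $p_t(\cdot\mid -h)$ are Gaussians with equal variance, so the three-$\sigma$ condition collapses to $|\mu_1(t)-\mu_2(t)|\le 6\,s(t)$ with $s(t)$ the common standard deviation. For VP this reads $2\sqrt{\alpha_t}\,h\le 6\sqrt{1-\alpha_t}$, i.e.\ $h^2\le 9(1-\alpha_t)/\alpha_t=:F(t)$; for VE, where the conditional mean is frozen at $x_0$, it reads simply $2h\le 6\,\sigma(t)$, i.e.\ $\sigma(t)\ge h/3$. In both cases the relevant object ($F$ for VP, $\sigma$ for VE) is continuous and strictly increasing with value $0$ at $t=0$ (for VE after subtracting the baseline $\sigma(0)$, or outright for the EDM schedule $\sigma(t)=t$ actually used by Purify++), so the interaction time is the unique solution $t_*$ of $F(t_*)=h^2$, resp.\ $\sigma(t_*)=h/3$.

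Next I would expand near $t=0$. Writing $\alpha_t=\exp(-\beta_1 t-\beta_2 t^2)$, I get $F(t)=9\beta_1 t+O(t^2)$ with $F'(0)=9\beta_1\ne 0$; inverting the defining equation to leading order (inverse-function theorem in $9\beta_1 t_*+O(t_*^2)=h^2$) yields $t^{(VP)}_*=\tfrac{h^2}{9\beta_1}+O(h^4)$, which is of second order in $h$. For VE, the exploding standard deviation grows at a nonzero linear rate at the origin ($\sigma(t)=t$ for EDM, or $\sigma'(0)=\sigma_{\min}\log(\sigma_{\max}/\sigma_{\min})>0$ above baseline for the log-linear schedule), so $\sigma(t_*)=h/3$ gives $t^{(VE)}_*=\tfrac{h}{3\sigma'(0)}+o(h)$, of first order in $h$. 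Matching this to the $o(h^2)$/$o(h)$ statement of the theorem is then immediate.

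I expect the only genuine subtlety to be pinning down the asymptotic regime, not the calculus. One has to observe that at $t=0$ both conditional variances are (essentially) zero, which is what makes the interaction time genuinely positive and $\to0$ as $h\to0$; and on that vanishing window $\sqrt{\alpha_t}\approx1$, so the VP mean gap is $\approx 2h$, the same as VE's, and the entire gap between the two orders is produced by $s_{VP}(t)=\sqrt{1-\alpha_t}\sim\sqrt{\beta_1 t}$ behaving like $\sqrt t$ versus $s_{VE}(t)$ behaving like $t$. The two things to be careful about are therefore (i) justifying that the linear Taylor term dominates the remainder when the defining equation is inverted --- exactly the monotonicity-plus-nonvanishing-derivative input noted above --- and (ii) for the literal log-linear VE schedule with a fixed positive $\sigma_{\min}$, measuring the growth of $\sigma$ relative to its $t=0$ baseline (equivalently, adopting the EDM convention $\sigma(t)=t$), since otherwise for small $h$ the distributions already interact at $t=0$ and the interaction time degenerates. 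Neither point needs more than the elementary estimates sketched here.
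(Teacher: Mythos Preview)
Your proposal is correct and follows essentially the same route as the paper: write the three-$\sigma$ interaction condition as an equation in $t$ for each process and extract the leading order of $t_*$ in $h$. The paper carries this out by solving the VP quadratic $\beta_2 t^2+\beta_1 t=\log(1+h^2/9)$ in closed form and then Taylor-expanding, whereas you linearise $F(t)$ near $t=0$ and invert; your flagging of the $\sigma_{\min}$ baseline issue in the VE case is a genuine subtlety that the paper handles less carefully (it assumes $h/3>\sigma_{\min}$ and then expands around $h=3\sigma_{\min}$ rather than $h\to 0$), so your suggested fix of working relative to $\sigma(0)$ or using the EDM convention $\sigma(t)=t$ is the cleaner way to make the $o(h)$ claim rigorous.
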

\begin{proof}
We divided the proof for VP diffusion and VE diffusion separately. 
\paragraph{VP interaction time} By definition of VP diffusion with linear $\beta$-schedule \ref{equ:vp_forward}. The conditional distribution writes
\begin{align*}
    p_t^{(VP)}(x_t|x_0) = \mathcal{N}(x_t;\sqrt{\alpha_t}x_0, 1-\alpha_t).
\end{align*}
Here $\alpha_t = \exp(-\int_0^t \beta(s)ds) = \exp(-\beta_2 t^2 -\beta_1 t)$. So initializing from two distinct point $h$ and $-h$, marginal distributions write 
\begin{align*}
    &p_t(x_t) = \mathcal{N}(x_t;e^{-\beta_2 t^2 -\beta_1 t}h, 1-e^{-\beta_2 t^2 -\beta_1 t}),\\
    &q_t(x_t) = \mathcal{N}(x_t;-e^{-\beta_2 t^2 -\beta_1 t}h, 1-e^{-\beta_2 t^2 -\beta_1 t}).
\end{align*}
By definition of interaction time, we have 
\begin{align*}
    t^{(VP)}_* &= \inf_t \{e^{-\beta_2 t^2 -\beta_1 t}h = 3 (1-e^{-\beta_2 t^2 -\beta_1 t})\}.
\end{align*}
So $t^{(VP)}_*$ satisfies equation
\begin{align*}
    e^{-\beta_2 t_*^2 -\beta_1 t_*}h = 3 (1-e^{-\beta_2 t_*^2 -\beta_1 t_*}).
\end{align*}
Reversing the above equation, we conclude the interaction time to be 
\begin{align*}
t^{(VP)}_* &= \frac{1}{2}\frac{\beta_1}{\beta_2} (\sqrt{1+4\frac{\beta_2}{\beta_1}\log (1+\frac{h^2}{9}) } - 1)\\
&\sim \frac{1}{2}\frac{\beta_1}{\beta_2}\big[\frac{4+\frac{\beta_2}{\beta_1}\log (1+\frac{h^2}{9})}{2} \big]\\
&\sim \frac{\beta_1}{\beta_2}\frac{\beta_2}{\beta_1} \log (1+\frac{h^2}{9})\\
&\sim \frac{h^2}{9}\sim o(h^2).
\end{align*}

\paragraph{VE interaction time} By definition of VE diffusion with log-linear $\sigma$-schedule \ref{equ:ve_forward}. The conditional distribution writes 
\begin{align*}
    p_t^{(VE)}(x_t|x_0) = \mathcal{N}(x_t;x_0,\sigma^2(t)).
\end{align*}
Here $\sigma(t) = \sigma_{min}^{(1-t)}\sigma_{max}^t$. So initializing from two distinct point $h$ and $-h$, marginal distribution write
\begin{align*}
    & p_t(x_t)=\mathcal{N}(x_t;h, \sigma_{min}^{2(1-t)}\sigma_{max}^{2t}), \\
    & p_t(x_t)=\mathcal{N}(x_t;-h, \sigma_{min}^{2(1-t)}\sigma_{max}^{2t}).
\end{align*}
By definition of interaction time, we have
\begin{align*}
    t^{(VE)}_* = \inf_t \{h=3 \sigma_{min}^{(1-t)}\sigma_{max}^{t}\}.
\end{align*}
For VE diffusion with log-linear $\sigma$-schedule, $\sigma_{min}$ is often set to be sufficiently small such that $h/3 > \sigma_{min}$ holds. Reversing the above equation, we conclude the interaction time to behold
\begin{align*}
    t^{(VE)}_* &= \log(\frac{h/3}{\sigma_{min}})/\log(\frac{\sigma_{max}}{\sigma_{min}})\\
    &\sim \log(1 + \frac{h/3-\sigma_{min}}{\sigma_{min}})/\log(\frac{\sigma_{max}}{\sigma_{min}})\\
    & \sim (\frac{h/3-\sigma_{min}}{\sigma_{min}})/\log(\frac{\sigma_{max}}{\sigma_{min}})\\
    & \sim o(h).
\end{align*}
\end{proof}

\subsection{Proof of Theorem \ref{thm:mix_sde}}\label{app:a2}
\begin{theorem}
Assume $f$ and $g$ are under some conditions. Consider the diffusion process 
\[
dX_t = f(X_t,t)dt + g(t)dW_t,
\]
where $W$ is an independent Weiner process. Assume $0\leq \lambda_t\leq 1$ is a positive mix coefficient. Then the process which mixes the reversed ODE and marginal Langevin dynamics shares the same marginal distribution as the forward diffusion
\begin{align*}
    &dX_t =[f(X_t,t) - \frac{1}{2}(1+\lambda_t^2)g^2(t)\nabla \log p(X_t,t)]dt + \lambda_t g(t)d\Bar{W}_t, t\in [T,0].
\end{align*}
\end{theorem}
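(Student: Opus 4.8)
The plan is to use the Fokker--Planck / Kolmogorov forward equation as the common currency: two Itô diffusions with the same initial law share all marginals if and only if they induce the same Fokker--Planck evolution for the densities. So I would first write down the Fokker--Planck equation for the forward SDE $dX_t = f(X_t,t)dt + g(t)dW_t$, namely
\begin{align*}
\partial_t p(x,t) = -\nabla\cdot\big(f(x,t)p(x,t)\big) + \tfrac{1}{2}g^2(t)\Delta p(x,t),
\end{align*}
and rewrite the diffusion term using the score: $\tfrac{1}{2}g^2(t)\Delta p = \nabla\cdot\big(\tfrac{1}{2}g^2(t)p\,\nabla\log p\big)$ (the flux-form of the Laplacian). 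This recasts the forward evolution as a pure continuity equation $\partial_t p = -\nabla\cdot\big(v\,p\big)$ with velocity field $v(x,t) = f(x,t) - \tfrac{1}{2}g^2(t)\nabla\log p(x,t)$, which is exactly the probability-flow ODE \eqref{equ:rev_ode}; this step is the content of the result of \citet{scoresde} that I am allowed to cite.

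Next I would compute the Fokker--Planck equation associated with the candidate mixed SDE \eqref{eqn:mix_sde}. Reading off the drift $b(x,t) = f(x,t) - \tfrac{1}{2}(1+\lambda_t^2)g^2(t)\nabla\log p(x,t)$ and the diffusion coefficient $\lambda_t g(t)$ (keeping in mind the time runs backward, so one is checking the backward Kolmogorov/Fokker--Planck equation; the sign bookkeeping from $t\in[T,0]$ and $d\bar W_t$ is the one genuinely fiddly point), its density $\tilde p$ would evolve by
\begin{align*}
\partial_t \tilde p = -\nabla\cdot\big(b\,\tilde p\big) + \tfrac{1}{2}\lambda_t^2 g^2(t)\Delta \tilde p.
\end{align*}
The strategy is then to verify that $\tilde p = p$ solves this: substitute $p$ for $\tilde p$, expand $-\nabla\cdot(b\,p)$ into the $f$-part plus $\nabla\cdot\big(\tfrac{1}{2}(1+\lambda_t^2)g^2(t)p\,\nabla\log p\big)$, again convert $\tfrac{1}{2}\lambda_t^2 g^2(t)\Delta p$ into flux form $\nabla\cdot\big(\tfrac{1}{2}\lambda_t^2 g^2(t)p\,\nabla\log p\big)$, and observe that the two $\lambda_t^2$ pieces cancel, leaving precisely $\partial_t p = -\nabla\cdot(fp) + \tfrac{1}{2}g^2(t)\Delta p$, the forward Fokker--Planck equation, which $p$ satisfies by hypothesis. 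Since the initial distributions agree and (under the regularity ``conditions'' assumed on $f,g$, ensuring existence/uniqueness for the Fokker--Planck PDE) the solution is unique, the marginals of \eqref{eqn:mix_sde} coincide with those of the forward diffusion for all $t$.

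An equivalent and perhaps cleaner way to organize the same computation, which I would mention as the conceptual reason it works: the reverse ODE \eqref{equ:rev_ode} moves mass by the deterministic velocity $v$, and the ``marginal Langevin dynamics'' term $-\tfrac12 g^2(t)\nabla\log p\,dt + g(t)d\bar W_t$ has $p(\cdot,t)$ as its instantaneous stationary measure (its Fokker--Planck operator annihilates $p$ at each frozen $t$); adding any $\lambda_t^2$-scaled copy of the score-drift together with $\lambda_t g(t)$-scaled noise is again a Langevin-type correction whose Fokker--Planck contribution vanishes on $p$, so it does not disturb the marginals produced by the ODE flow. The main obstacle I anticipate is not any deep idea but getting the time-reversal sign conventions consistent — making sure that what is being checked is the correct (backward) evolution equation for $t$ decreasing from $T$ to $0$, and that $\nabla\log p$ here denotes the score of the \emph{forward} marginals — since an off-by-a-sign there would spuriously turn the cancellation into a doubling. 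I would also flag that the proof is formal in the sense of assuming enough smoothness and decay on $f$, $g$, and $p$ for the flux-form manipulations and the PDE uniqueness to be legitimate, which is what the phrase ``$f$ and $g$ are under some conditions'' is doing.
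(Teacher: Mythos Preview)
Your proposal is correct and follows essentially the same route as the paper: write the forward Fokker--Planck equation, use the identity $\Delta p = \nabla\cdot(p\,\nabla\log p)$ to pass to the probability-flow ODE, and then show that adding the $\lambda_t^2$-scaled score drift together with $\lambda_t g(t)$ noise leaves the Fokker--Planck evolution unchanged because the two $\lambda_t^2$ contributions cancel. The paper organizes the computation by starting from the reverse ODE's Fokker--Planck equation and inserting $\pm\tfrac12\lambda_t^2 g^2\nabla\cdot(q\nabla\log q)$, whereas you substitute $p$ into the mixed SDE's Fokker--Planck equation and watch the $\lambda_t^2$ terms cancel; these are the same manipulation in reverse order, and your remark about the Langevin correction annihilating $p$ is exactly the conceptual content of that cancellation.
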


\begin{proof}
Consider a general diffusion \eqref{equ:forwardSDE}, the marginal density functions satisfy the so-called Fokker-Planck equation:
\begin{align}\label{eqn:fp_equation}
    \frac{d}{dt}p(x,t) = -\langle \nabla_x, p(x,t) f(x,t) \rangle + \frac{1}{2}g^2(t)\Delta_x p(x,t).
\end{align}
Notice that 
$\Delta_x p(x,t) = \langle \nabla_x, \nabla_x p(x,t) \rangle = \langle \nabla_x, p(x,t)\nabla_x\log p(x,t) \rangle$, the Fokker-Planck equation \eqref{eqn:fp_equation} turns to 
\begin{align}\label{eqn:fp_eq1}
    \frac{d}{dt}p(x,t) = \langle \nabla_x, p(x,t)\big[ \frac{1}{2}g^2(t)\nabla_x \log p(x,t) -  f(x,t)  \big] \rangle.
\end{align}
Equation \eqref{eqn:fp_eq1} describes the evaluation of the marginal density of forward diffusion \eqref{equ:forwardSDE} and corresponds to an ODE realization of the form
\begin{align*}
    dX_t = \big[f(X_t,t) - \frac{1}{2}g^2(t)\nabla_x \log p(X_t,t) \big]dt, t\in [T,0].
\end{align*}
Let $p(x,t)$ represent the marginal density of forward SDE \eqref{equ:forwardSDE} initialized with data distribution. Since the ODE is reversible, reverting above ODE initializing from $q(x,0)=p(x, T)$ gives
\begin{align}\label{eqn:rev_ode1}
    dX_t = [\frac{1}{2}g^2(T-t)\nabla_{X_t} \log p(X_t,T-t) - f(X_t,T-t)]dt, t\in [0,T].
\end{align}
Let $q(x,t)$ represents the marginal density of reverse ODE \eqref{eqn:rev_ode1}, and $p(x,0) = q(x,T)$. The reversibility of ODE results $p(x,t) = q(x,T-t)$ for $\forall t \in [0,T]$. The Fokker-Planck equation of \eqref{eqn:rev_ode1} writes
\begin{tiny}
\begin{align}
    &\frac{d}{dt}q(x,t) = \langle \nabla_x, q(x,t)\big[f(x,T-t) - \frac{1}{2}g^2(T-t) \nabla_x \log p(x,T-t) \big] \rangle\\
    &= \langle \nabla_x, q(x,t)\big[f(x,T-t) - \frac{1}{2}g^2(T-t) \nabla_x \log p(x,T-t) - \frac{1}{2}\lambda_t^2 g^2(T-t) \nabla_x \log q(x,t) \big] \rangle + \frac{1}{2}\lambda_t^2 g^2(T-t) \Delta_x q(x,t) \\
    & = \langle \nabla_x, q(x,t)\big[f(x,T-t) - \frac{1}{2}g^2(T-t)( \nabla_x \log p(x,T-t) + \lambda_t^2 \nabla_x \log q(x,t)) \big] \rangle + \frac{1}{2}\lambda_t^2 g^2(T-t) \Delta_x q(x,t)\\
    & = \langle \nabla_x, q(x,t)\big[f(x,T-t) - \frac{1}{2}g^2(T-t)(1+\lambda_t^2 )\nabla_x \log p(x,T-t) \big] \rangle + \frac{1}{2}\lambda_t^2 g^2(T-t) \Delta_x q(x,t).
\end{align}
\end{tiny}
The last equation corresponds to a reversed SDE 
\begin{align}
    dX_t = [\frac{1}{2}g^2(T-t)(1+\lambda_t^2)\nabla_{X_t} \log p(X_t,T-t) - f(X_t,T-t)]dt + \lambda_t g(T-t)d\Bar{W}_t, t\in [0,T].
\end{align}
which is the reversing form of SDE in Theorem \ref{thm:mix_sde}.
\end{proof}

\section{Experimental Details}\label{app:exp1}
This section represents details of our experiments in this paper. 

\subsection{Details on Experiments in Section \ref{sec:improve_diffusion}}
In the experiments of Figure \ref{fig:ve_vp_time_compare}, we train a VP diffusion and a VE diffusion model on the MNIST dataset separately. The MNIST data is resized to 32x32 instead of 28x28 with a bilinear interpolation algorithm. 

The diffusion architecture remains the same as is used in \cite{ncsnv2}. The VP diffusion process uses the linear $\beta$ schedule as in \citet{ddpm}, while the VE diffusion uses the log-linear $\sigma$ schedule as in \citet{scoresde}. Both diffusion processes' time is discretized to 1000 steps and the training uses the maximum likelihood weighted denoising score matching objective which is proposed in \citet{scoreflow}. 

We train a wide resent classifier with width factor 16 and depth factor 8. The classifiers architecture closely remains the same as \citet{wideresnet} with a dropout rate of 0.3. The training of the classifier uses the SGD algorithm with a learning rate of 0.1. Both the classifier and diffusion models are trained on training data of the MNIST dataset. 

With trained diffusion models and classifier models, we test both standard accuracy and robust accuracy for discrete time steps, 0,10,...,900, and 1000, for adversarial purification. The standard accuracy accounts for diffusion purification with non-attacked data. The robust accuracy accounts for the test accuracy of diffusion-purified samples of adversarial attacked data. 

\subsection{Details on Experiments in Section \ref{sec:solver}}
Table \ref{tab:heun} compares the different behaviors of numerical solvers with the same diffusion models for diffusion purification and FID evaluation. 

In this experiment, we use a pre-trained diffusion model from \citet{karras22} (\url{https://github.com/NVlabs/edm}). The diffusion model is pre-trained on training data of the CIFAR10 dataset with the \emph{ddpmm++} architecture as proposed in \citet{karras22}. We evaluate the robust accuracy of diffusion purification under PGD ($\ell_\infty$) attack with 40 attack steps. The implementation of the Heun numerical method is modified from EDM's second-order sampler. The Euler-Maruyama scheme is implemented by masking the second-order correction steps of the Heun method. The left two columns record the optimal robust accuracy of EM and Heun methods with the searched optimal diffusion time. The results show that the robust accuracy does not differ much regardless of discretization methods.

The Frechet Inception Distance (FID) is a widely used positive-valued metric for assessing image generation quality. The lower the FID is, the better the sampling quality is. The right two columns list the FID values of samples generated from the same diffusion model as used in purifications. The FIDs are obtained from Figure 2 of \citet{karras22}. The values show that the FIDs differ very much especially when the number of steps is small.

\subsection{Details on Experiments in Section \ref{sec:randomenss}}
Table \ref{tab:randomness} shows the influences of the strength of randomness on purification performance as we formulated in Theorem \ref{thm:mix_sde}. 

To simplify the method, we use a constant $\lambda$ instead of time-dependent $\lambda_t$ as in expression \eqref{eqn:mix_sde}. The diffusion model (VP model) and classifier are the same as the one used in Table \ref{fig:ve_vp_time_compare}. We let the strength of the randomness coefficient $\lambda$ to vary from 0.0, which recovers the deterministic ODE \eqref{equ:rev_ode}, to 1.0, which recovers the default purification SDE \eqref{equ:rev_sde}. We find that the optimal purification performance does not occur when $\lambda=0$ (ODE) or $\lambda=1$ (SDE). The optimal robust accuracy occurs when $\lambda=0.75$, leading to an optimal robust accuracy of 93.36\%. 

The experiment shows that by assigning suitable strength of randomness of diffusion purification, the method's robustness gets improved. 

\subsection{Details on Experiments in Table \ref{tab:bbattack}}
In Table \ref{tab:bbattack} we evaluated the proposed Purify++ for defending two standard black-box attacks: the square attack and the SPSA attack. 

We use the pre-trained diffusion model trained unconditionally on the CIFAR10 dataset from \citet{karras22}. The model architecture is \emph{ddpm++}. We train a wide resnet classifier model with a width factor of 28 and depth of 10, following the same python code as used in \citet{diffpure}. 

For the square attack, we use the auto-attack python library ({\url{https://github.com/fra31/auto-attack}}) with default hyper-parameters, as is also used in previous works we list in the table. For the SPSA attack, we put detailed hyper-parameters in Table \ref{tab:spsa_hypers}.

\begin{table}
\caption{Hyper-parameters of SPSA attacks.}
\vskip 0.1in
\centering
\small
\scalebox{1.0}{
\begin{tabular}{lc}
\toprule
$\lambda$ &  \\
\midrule 
norm  & $\ell_\infty$ \\
$\epsilon$  & 8/255 \\
$\delta$  & 0.01 \\
lr  & 0.01 \\
nb-iter  & 100 \\
nb-sample  & 128 \\
max-batch-size  & 1280 \\
targeted  & False \\
loss-fn  & None \\
clip-min  & 0.0 \\
clip-max  & 1.0 \\
\bottomrule
\end{tabular}}
\vskip -0.1in
\label{tab:spsa_hypers}
\end{table}

\subsection{Details on Experiments in Table \ref{tab:cifar10-1} and Table \ref{tab:cifar10-2}}
Table \ref{tab:cifar10-1} shows a comparison of the proposed Purify++ against other defenses under PGD attacks with $\ell_\infty$ and $\epsilon=8/255$. The PGD step is 40 as is usually used in \citet{scoreadvpuri}. We use the same model and classifier as in Table \ref{tab:bbattack}. We take 5 random seeds for each value and take the average as the posted one. The standard deviation is calculated within 5 random results. 

Especially, we conduct an ablation study in Table \ref{tab:cifar10-1}. The \textbf{Purify++ (E+H)} represents the method with an improved diffusion model (\textbf{E} represents the used EDM model in \citet{Karras2022ElucidatingTD}). The letter H represents the proposed Heun solver for purification SDE. The letter R represents the introduction of control of randomness as described in Section \ref{sec:randomenss}. 

As for Table \ref{tab:cifar10-2}, we test the PGD with norm $\ell_2$ and $\epsilon=128/255$.

\subsection{Details on Experiments in Table \ref{tab:cifar10-3}}
Table \ref{tab:cifar10-3} conducts a special comparison of our proposed Purify++ and a previous diffusion purification method, the Diffpure \citep{diffpure}. 

The DiffPure is a diffusion purification method that used iDDPM (\url{https://github.com/openai/improved-diffusion}), Euler-Maruyama solver and standard diffusion sampling SDE. Our proposed Purify++ improves all three aspects of DiffPure, so there is a necessity of comparing Purify++ and Diffpure on the same benchmarking test. More precisely, we compare the robust accuracy of CIFAR10 under PGD attacks with both $\ell_\infty$ and $\ell_2$ norm. For both attacks, we use the same setting as is used in \citet{diffpure}. 

The experiments show that Purify++ outperforms Diffpure's robust accuracy, which indicates that the improvements that Purify++ takes contribute to better adversarial robustness. 

\subsection{Details on Experiments in Table \ref{tab:cifar10-4}}
Table \ref{tab:cifar10-4} summarizes the empirical robustness of Purify++ for defending the BPDA+EOT attack, a strong white-box attack that is specially designed for pre-processing-based defenses. We closely follow the same setting for BPDA+EOT attacks from \citet{diffpure}. Since the BPDA is very computationally expensive, we evaluate the empirical robustness with a sub-set of images from CIFAR10 of the set size of 512, which is the same as \citet{diffpure}. We also run 3 random seeds and average the robust accuracy to get final outputs. The standard deviation is computed within 3 trials. 

As Table \ref{tab:cifar10-4} shows, Purify++ outperforms the previous state-of-the-art robust accuracy.

\subsection{Details on Experiments in Table \ref{tab:non-sensitivity-pgd}}
Table \ref{tab:non-sensitivity-pgd} records the robust accuracy of Purify++ with different numbers of diffusion's total steps  on the CIFAR10 dataset. The first column, the total diffusion steps, represents the number of diffusion levels that we search for diffusion purification. The second column, the number of diffusion steps for purification, is the optimal diffusion level which results in the best robust accuracy. The third column, the number of neural function evaluations of diffusion purification, is the number of times that we need to execute a forward pass of the diffusion model's network. For the first-order numerical solver, NFEs are supposed to be equal to diffusion steps (the second column). For the second-order numerical solver, NFEs are supposed to be equal to two times of diffusion steps. 

In this experiment, we use PGD with $\ell_\infty$ norm and $\epsilon=8/255$. We use 40 attacking steps as is widely used for defending PGD attacks. The result in Table \ref{tab:non-sensitivity-pgd} shows that Purify++ is non-sensitive to either the total number of diffusion steps or the number of NFEs. This non-sensitive property makes Purify++ both efficient and stable for use in practice. 

\section{More Experimental Results}

\subsection{Ablation study}

\subsubsection{Impact of number of PGD iteration steps}

\begin{table}[h]
\caption{Robust Accuracy against Classifier-PGD attack under $\ell_{\infty}(\epsilon=8 / 255)$ threat model on CIFAR10 dataset, with different PGD iteration steps.}
\centering
\begin{tabular}{c c c c c}
\toprule
PGD Iteration steps & 10 & 20 & 40 & 100 \\
\midrule
Robust Accuracy $(\%)$ & 89.89 & 90.03 & 90.03 & 90.05\\
\bottomrule
\end{tabular}
\label{tab:pgd_iter}
\end{table}

In this section, we present robustness performance of our proposed method in CIFAR10 dataset, using different PGD iteration steps, with other hyper-parameters fixed. Table \ref{tab:pgd_iter} shows that with PGD Iteration steps ranging from 10 to 100, the robust accuracy  of our method changes little.

\subsubsection{Impact of number of attack budgets and mixing coefficients}

\begin{table}[h]
\caption{Robust Accuracy against Classifier-PGD attack under $\ell_{\infty}$ threat model on MNIST dataset, with different attack budgets $\epsilon$ and mixing coefficients $\lambda$. Number of PGD attack steps is set to 40. We mark the best performance for each budget by an underlined and bold value and the second best by bold value.}
\vskip 0.1in
\centering
\begin{tabular}{c c c c c c}
\toprule
$\lambda$ & 0.0 (ODE) & 0.25 &  0.5 & 0.75 & 1.0 (SDE) \\
\midrule 
Standard Acc $(\%)$ & 99.17  & 99.22 & 99.10 & 99.06 & 99.00\\
\midrule
Robust Acc $(\%)$ \\
\quad $\epsilon = 0.10$  & 97.57  & \textbf{97.96} & \underline{\textbf{98.03}} & 97.92 & 97.91\\
\quad $\epsilon = 0.15$ &  96.36 & 96.82 & \textbf{96.98} & \underline{\textbf{97.00}} & 96.93\\
\quad $\epsilon = 0.20$ &  94.06 & 95.17 & \textbf{95.53} & \underline{\textbf{95.69}} & 95.44\\
\quad $\epsilon = 0.25$ & 90.51 & 92.53 & \textbf{93.15} & \underline{\textbf{93.36}} & 92.98\\
\quad $\epsilon = 0.30$ &  88.40 & 88.06 & \textbf{89.24} & \underline{\textbf{89.28}} & 88.89\\
\bottomrule
\end{tabular}
\vskip 0.1in
\label{tab:mnist_budget}
\end{table}

\begin{table}[h]
\caption{Robust Accuracy against Classifier-PGD attack under $\ell_{\infty}$ threat model on CIFAR10 dataset, with different attack budgets $\epsilon$. Number of PGD attack steps is set to 40.}
\vskip 0.1in
\centering
\begin{tabular}{c c c c c}
\toprule
$\epsilon$ & 2/255. &  4/255. & 8/255. & 16/255. \\
\midrule
Robust Accuracy $(\%)$ & 91.35 & 90.79 & 90.03 & 89.24\\
\bottomrule
\end{tabular}
\vskip 0.1in
\label{tab:cifar10_budget}
\end{table}

In this section, we conduct diverse experiments against Classifier-PGD attack on both MNIST and CIFAR10 datasets with different perturbation budgets and different choices of randomness level. Else hyper-parameters are fixed across all experiments. Table \ref{tab:mnist_budget} and Table \ref{tab:cifar10_budget} demonstrate that with the attack budget increasing, robustness performance of our method does not decrease dramatically. Table \ref{tab:mnist_budget} also show that the optimal mixing coefficients are similar across different attack budgets.

\subsubsection{Performance with different classifier architectures}

\begin{table}[h]
\caption{Standard accuracy and robust accuracy against Classifier-PGD attack using different classifier architectures.}
\vskip 0.1in
\centering
\begin{tabular}{c c c c}
\toprule
 Architecture & Nature Accuracy & Standard Accuracy & Robust Accuracy \\
\midrule
ResNet-50  & 96.04 & 93.12 & 90.82 \\
WideResNet-28-10 & 95.19 & 92.41 & 90.08 \\
WideResNet-70-16 & 95.19 & 92.2 & 89.95 \\
\bottomrule
\end{tabular}
\vskip 0.1in
\label{tab:clf-arc}
\end{table}

To evaluate that our method is model-agnostic, we consider three different classifier architectures: ResNet-50, WideResNet-28-10, and WideResNet-70-16. We perform experiments on the CIFAR10 dataset against a Classifier-PGD attack, with $\ell_{\infty}$-norm, $\epsilon=8/255$, 40 PGD steps and other parameters fixed. Table \ref{tab:clf-arc} shows that our method achieves high robust accuracy consistently among all architectures.f
 
\section{Purified Adversarial Examples on the CIFAR10 Dataset}

\begin{figure}
\vskip 0.1in
\centering
\includegraphics[height=10cm,width=10cm]{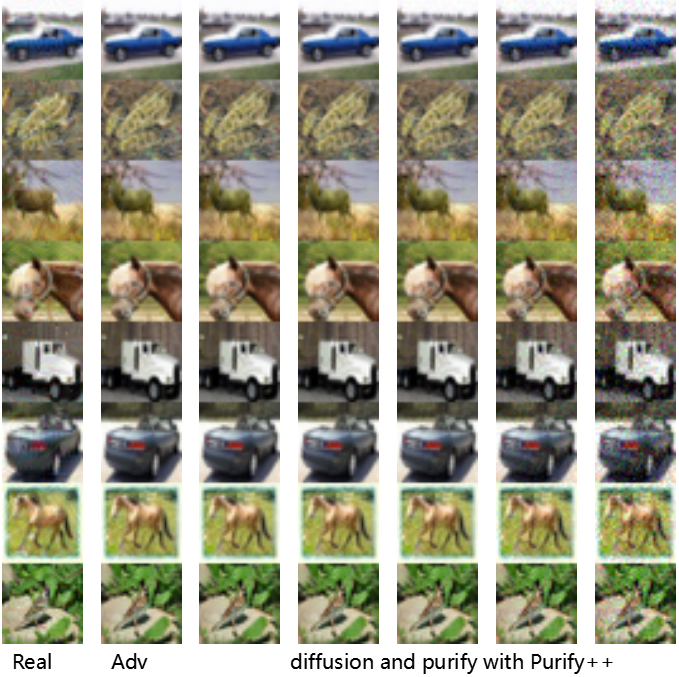}
\caption{Purified Adversarial Examples on the CIFAR10 Dataset.}
\label{fig:cifar10_vis}
\vskip -0.1in
\end{figure}


\end{document}